\newtheorem{theorem}{Theorem}
\newtheorem{lemma}[theorem]{Lemma}
\newtheorem{definition}[theorem]{Definition}
\def \R {\mathbb{R}}
\def \E {\mathbb{E}}
\newcommand{\abs}[1]{\left | {#1} \right |}
\newcommand{\ip}[2]{\langle {#1},{#2} \rangle}
\newcommand{\rvc}[2]{
\protect\textcolor{red}{#1}%
    \IfNoValueF{#2}{\protect\endnote{#2}%
    }%
}
\newcommand\MyBox[2]{
	\fbox{\lower0.75cm
		\vbox to 1.7cm{\vfil
			\hbox to 1.7cm{\hfil\parbox{1.4cm}{#1\\#2}\hfil}
			\vfil}%
	}%
}
\begin{document}

\title{A max-cut approach to heterogeneity in cryo-electron microscopy}

\author
{Yariv Aizenbud${^1}$~~Yoel Shkolnisky${^1}$\\
${^1}$School of Mathematical Sciences, Tel Aviv University, Israel
}

\maketitle
\begin{abstract}
The field of cryo-electron microscopy has made astounding advancements in the past few years, mainly due to advancements in electron detectors' technology. Yet, one of the key open challenges of the field remains the processing of heterogeneous data sets, produced from samples containing particles at several different conformational states.
For such data sets, the algorithms must include some classification procedure to identify homogeneous groups within the data, so that the images in each group correspond to the same underlying structure.
The fundamental importance of the heterogeneity problem in cryo-electron microscopy has drawn many research efforts, and resulted in significant progress in classification algorithms for heterogeneous data sets. While these algorithms are extremely useful and effective in practice, they lack rigourous mathematical analysis and performance guarantees.

In this paper, we attempt to make the first steps towards rigorous mathematical analysis of the heterogeneity problem in cryo-electron microscopy.
To that end, we present an algorithm for processing heterogeneous data sets, and prove accuracy and stability bounds for it. We also suggest an extension of this algorithm that combines the classification and reconstruction steps. We demonstrate it on simulated data, and compare its performance to the state-of-the-art algorithm in RELION.

\end{abstract}

\noindent\textbf{Keywords:} cryo-electron microscopy, single particle, three-dimensional reconstruction,  heterogeneity, classification, max-cut, graph partitioning.

\smallskip

\noindent\textbf{AMS Subject Classification:} 92C55, 92E10, 68U10, 62H30

\section{Introduction}
The study of the molecular structure of complex proteins has drawn many efforts in the past few decades.
Among the many structure determination methodologies available, cryo-electron microscopy (cryo-EM) single particle reconstruction (SPR)~\cite{Walz2015} has become the state-of-the-art tool for structure determination~\cite{Methodoftheyear2015}, mainly due to the introduction of direct electron detectors, as well as due improvements in the accompanying data processing algorithms~\cite{Bai2015}. These improvements resulted in three-dimensional molecular models with unprecedented resolutions as high as 2.2~\angstrom~\cite{Bartesaghi2015}. The tremendous importance and impact  of cryo-electron microscopy was unequivocally acknowledged by awarding the 2017 Nobel Prize in Chemistry to the pioneers of the field ``for developing cryo-electron microscopy for the high-resolution structure determination of biomolecules in solution"~\cite{nobel2017}.

The process of resolving the three-dimensional structure of a molecule using cryo-EM SPR typically consists of the following steps. First, a sample consisting of many copies of the investigated molecule is rapidly frozen and is imaged by an electron-microscope. This results in a large image of the sample, known as a micrograph, containing multiple randomly oriented and positioned particle images. The individual particle images are then segmented from this micrograph, resulting in a stack of images, where each image corresponds to a projection of one of the copies of the molecule in the sample. This process is repeated until a sufficiently large stack of raw images is obtained. The images in the stack are then clustered, aligned, and averaged, resulting in images of improved quality, known as class averages, which are used to remove low-quality raw images from the stack. Next, a low-resolution model of the molecule is generated from the class averages or the raw images themselves, and this model is refined using the stack of raw images into a high-resolution  model~\cite{Frank2006,scheres2012relion}.

Algorithms for estimating a low-resolution model of the investigated molecule are often based on detecting common lines between pairs of images~\cite{vanHeel, wang2013orientation, Shkolnisky2012}. The underlying assumption of such algorithms is that all images were generated from exactly the same underlying molecule. Unfortunately, in many cases, it is impossible to purify a sample consisting of only a single type of molecule. In such cases, another step is required, which classifies the images into groups such that all images in the same group correspond to the same type of molecule. This problem is known as the heterogeneity problem.

There are several approaches to address the heterogeneity problem. These approaches follow one of four paradigms: maximum-likelihood estimation (MLE)~\cite{scheres2010maximum, scheres2012relion, scheres2016processing, punjani2017cryosparc, zheng2012three}, covariance matrix estimation~\cite{anden2015covariance, katsevich2013covariance,Liao2015, penczek2006method, anden2017structural}, reduction to semidefinite programming (SDP)~\cite{lederman2016representation}, and graph partitioning~\cite{shatsky2010automated,herman2008classification}. Out of the four, maximum-likelihood approaches are the most widely used and have proven very effective in practice. The idea behind maximum-likelihood approaches is to formulate a function that attains its minimum for the correct assignment of the images into the different classes (and orientations), and to search for this minimum using an optimization method such as expectation-maximization.
The number of unknowns in the resulting optimization problem is typically very high.
In addition to the huge search-space for the optimization, the function that is being minimized is highly non-convex, and it is therefore impossible to provide any guarantees regrading its convergence properties. There are results on convergence of the EM algorithm in some special cases~\cite{wu1983convergence}, but those do not apply in practical settings. That being said, maximum-likelihood methods are currently considered state-of-the-art and are being widely applied to experimental data sets.

The second approach to the heterogeneity problem consists of algorithms that are based on covariance matrix estimation~\cite{anden2015covariance, katsevich2013covariance,Liao2015, penczek2006method}. These algorithms consider the unknown three-dimensional volumes of the molecules as realizations of a random variable, and attempt to approximate the covariance between any two voxels of the random variable. In~\cite{anden2015covariance}, it is described how to use the (estimated) covariance matrix to reconstruct the different volumes.
The main current limitation of these approaches is their computational complexity, as the covariance matrix that corresponds to a volume of size $L \times L \times L$ is of size $L^{3} \times L^{3}$, and thus it is extremely computationally expensive to estimate it for large $ L $. Consequently, covariance-based methods typically require down-sampling the input images to a sufficiently low resolution. The resulting low-resolution volumes may be used to initialize the aforementioned MLE approaches.
Recently, a method that uses MLE to directly approximate the principal components of the covariance matrix has been proposed in~\cite{tagare2015directly}.

A third approach to the heterogeneity problem has been recently proposed in~\cite{lederman2016representation}, based on the ``Non-Unique Games'' framework, which was presented and related to cryo-EM in~\cite{bandeira2015non}. The
Non-Unique Games framework provides a representation theoretic approach
to studying problems of alignment over compact~\cite{bandeira2015non} and non-compact~\cite{ozyesil2018synchronization} groups. It offers convex relaxations for alignment problems, which are formulated as semidefinite programs (SDPs). Under certain circumstances, the framework guarantees global optimality of the solution.
Although this approach seems to be very general, it is currently too computationally expensive and has not yet been applied to cryo-EM data.

Finally, the fourth approach to the heterogeneity problem is based on devising a similarity measure for pairs of images, and then assigning the images into different classes based on that similarity measure~\cite{shatsky2010automated,herman2008classification}. The algorithm we propose goes in this direction, in a way that allows us to analyze its properties and to derive some bounds on its accuracy. The idea behind our algorithm is to use a score function to measure the similarity between every pair of images, to construct a weighted graph whose weights are given by these similarities, and then to use a maximum $K$-cut algorithm to determine which images belong to each class. Our score function assigns low score to images of the same underlying structure, assigns high score to images of different underlying structures, and is designed such that the numerical properties of the resulting algorithm can be explicitly stated and proved.
In particular, unlike other works that use graph-based partitioning, in this paper we present an iterative algorithm that uses not only the information of common lines' correlations but also uses the estimated imaging directions and their consistency with the common lines.

Of particular relevance to our work is the algorithm in~\cite{shatsky2010automated}, named ``multi-model reconstruction", and so we discuss it in more details. The multi-model reconstruction algorithm takes as an input the projection images, as well as some initial ``average model'' and the number $K$ of different conformations present in the data. Then, it generates 98 reference images by projecting the initial model in 98 directions (the number 98 is determined by the chosen discretization of the space of three-dimensional orientations), and assigns each input image to one of the directions (by projection matching to the reference images). This results in a partition of the input images to 98 groups. Next, each of the 98 groups is partitioned into $K$ clusters, where the intuition is to partition the set of images assigned to each reference image according to the $K$ underlying structures. Then each of the $ 98 K $ clusters is averaged to get $ 98 K $ sub-class averages. The next step of the multi-model reconstruction algorithm is to partition the entire set of sub-class averages into $K$ clusters, by trying to group together all images that correspond to the same underlying structure. To that end, a graph is constructed, whose nodes correspond to the sub-class averages, and whose (weighted) edges encode the similarities between the sub-class averages as follows. For each pair of sub-class averages corresponding to different directions (out of the 98), the weight of the corresponding edge is set according to the similarity of the common lines between the sub-class averages (similar images have high weight between them). For each pair of sub-class averages of the same direction, the weight is set to zero (so that two sub-class averages that correspond to the same direction will not be ``connected"). Spectral clustering on this graph results in the assignment of all images to $K$ groups, and each such group is used to generate an initial model of one of the conformations present in the data set. This process generates the $K$ starting points for subsequent iterative refinement.

There are several key differences between the multi-model reconstruction algorithm of~\cite{shatsky2010automated} and the one proposed in the current paper. First, the multi-model reconstruction algorithm requires an initial model for generating reference images, whereas the theoretical guarantees of the algorithm in the current paper are independent of any initial model. Second, the key step in the multi-model reconstruction algorithm of clustering the images assigned to the same reference image into $K$ clusters ignores relations between images assigned to different reference images. In contrast, our algorithm considers all images in the data set at once, without any such local per-view processing. Third, the multi-model reconstruction algorithm estimates the initial models for refinement in one step, while our algorithm iterates to improve its estimates. Last,~\cite{shatsky2010automated} does not present any mathematical analysis to justify the convergence of the algorithm or to demonstrate its properties. This last point is the distinguishing property of our algorithm. It may therefore serve as a first step towards rigorous mathematical analysis of practical algorithms for the heterogeneity problem.

The paper is organized as follows. In Section~\ref{sec:prob_for}, we formulate the problem and set up the required notation. In Section~\ref{sec:preliminaries}, we review the required mathematical background. We describe the algorithm in details in Section~\ref{sec:alg}, and prove its performance bounds in Section~\ref{sec:proofs}. We demonstrate the algorithm on simulated data and compare its performance with RELION~\cite{scheres2016processing} in Section~\ref{sec:experimental_results}. Some concluding remarks are given in Section~\ref{sec:conclusion}.

\section{Problem formulation} \label{sec:prob_for}

We start by presenting the homogeneous setting of cryo-EM single particle reconstruction (SPR). Mathematically, a molecule is modeled as a function $\phi:\mathbb{R}^{3} \to \mathbb{R}$. Each image $P_{i}$ generated by the electron microscope is produced by rotating the volume $ \phi $ by some unknown rotation $ R_i $, projecting the rotated volume along the $z$ direction, and convolving the resulting two-dimensional image with the point spread function (PSF) $H_i$ of the microscope. This results in a clean projection image which is shifted by some unknown shift $ (\Delta x_i,\Delta y_i)  \in \mathbb{R}^2 $ before the noise is added. Explicitly, we can write the forward model in cryo-EM SPR as
\begin{align}\label{eq:projection integral_with_CTF}
\begin{aligned}
P'_{i}(x,y) &= \left(H_i *\int_{-\infty}^{\infty} \phi \left( R_{i}{r}
\right)\, dz \right), \quad r=(x,y,z)^{T},\quad i=1,\ldots,N,
\\
P_{i}(x,y) &= P'_{i}(x+\Delta x_i,y+\Delta y_i) + \mbox{noise}.
\end{aligned}
\end{align}
The point spread function $ H_i $ and it's Fourier transform, called the contrast transfer function (CTF), are known from the experimental settings or can be estimated. 
In this (homogeneous) setting, all images are assumed to correspond to exactly the same underlying molecule $\phi$. The goal of cryo-EM structure determination is to recover $\phi$ given a finite set of its images $P_{1},\ldots,P_{N}$ generated according to~\eqref{eq:projection integral_with_CTF}.

In the discrete heterogeneous setting of cryo-EM structure determination considered here, we have $K$ underlying molecules $\phi_{1},\ldots,\phi_{K}$, and each two-dimensional image $P_i$ is generated according to~\eqref{eq:projection integral_with_CTF} from one of $\phi_{1},\ldots,\phi_{K}$. 
Explicitly, we use the following simplified model of heterogeneous cryo-EM SPR,
\begin{equation}\label{eq:projection integral}
P_{i}(x,y) = \int_{-\infty}^{\infty} \phi_k \left( R_{i}{r} \right
)\, dz, \quad r=(x,y,z)^{T},\quad i=1,\ldots,N,
\end{equation}
and, the noisy model
\begin{align}\label{eq:projection integral_noisy}
\begin{aligned}
P'_{i}(x,y) &= \int_{-\infty}^{\infty} \phi_k \left( R_{i}{r}
\right)\, dz , \quad r=(x,y,z)^{T},\quad i=1,\ldots,N,
\\
P_{i}(x,y) &= P'_{i}(x+\Delta x_i,y+\Delta y_i) + \mbox{noise},
\end{aligned}
\end{align}
where we ignore the CTF. This is in accordance with the common practice that ignores the CTF when estimating an initial model, and only applies phase-flipping to the images~\cite{cong2010single}. If an image $P_{i}$ was generated from $\phi_{k}$ according to~\eqref{eq:projection integral_noisy}, then we denote $C(i)=k$, that is, the ``class'' of image $P_{i}$ is $k$.

We denote by $G_{k}$ the set of indices of all images that correspond to the molecule $\phi_{k}$, namely
\begin{equation}\label{eq:Gk}
G_{k} = \left \{ i\, | \, C(i)=k \right \}, \quad k=1,\ldots,K.
\end{equation}
Thus, $G_{1},\ldots,G_{K}$ are disjoint sets whose union is equal to the set $\left \{ 1,\ldots, N \right \}$. Our goal is to estimate the sets $G_{k}$, $k=1,\ldots,K$, and the rotations $R_{i}$, $i=1,\ldots,N$ of~\eqref{eq:projection integral}, given only the images $P_1,\ldots, P_N$. Once these sets and rotations have been estimated, the structures $\phi_{1},\ldots,\phi_{K}$ can be reconstructed using standard algorithms~\cite{Herman09,ReconMethods01}.

In this work we assume that $K$ is known. This is a common assumption in many works addressing the heterogeneity problem~\cite{shatsky2010automated,scheres2012relion} and in all existing software packages. Some works, such as~\cite{anden2015covariance,katsevich2013covariance}, may allow to detect $K$ automatically. A consequence of the dependency on $K$ is that the heterogeneity is assumed to be discrete, namely, that the underlying structure assumes one of $K$ possible conformations. In contrast to this approach, some recent works try to tackle continuous heterogeneity, where the underlying structure exhibits a continuum of conformations~\cite{lederman2017continuously}.

\section{Partitioning a graph: max-cut} \label{sec:preliminaries}
In this section, we present the max-cut problem, its Goemans-Williamson approximation algorithm, and a particular case for which this algorithm is exact. This particular case is used later for the analysis of our algorithm.

Let $(V,E)$ be a weighted graph, with a set of vertices $V$ and a set of edges $E$. We denote the nodes in $V$ by $v_{1},\ldots,v_{N}$, hence $\left \lvert V \right \rvert=N$. We assume that each edge $(v_{i},v_{j}) \in E$ is associated with a real-valued positive weight $w(v_{i},v_{j})$. Whenever $(v_{i},v_{j}) \not \in E$, we set $w(v_{i},v_{j})=0$. The weighted adjacency matrix $W$ of the graph $(V,E)$ is a matrix of size $N \times N$ with entries $w_{ij} = w(v_{i},v_{j})$. For simplicity of notation, we identify the set $\{v_1, \ldots, v_N\}$ with the set $\{1, \ldots ,N\}$ (via the trivial map $v_{i} \mapsto i$). A cut (sometimes called a 2-cut) is a partition of the set of vertices $V$ into two disjoint subsets, namely, into $G_{1}$ and $G_{2}$ such that $G_1 \cup G_2 = \{1, \ldots, N\}$ and $G_1 \cap G_2 = \emptyset$. The weight of a cut is defined as
\begin{equation}\label{eq:cut weight}
W(G_{1},G_{2}) = \sum_{i \in G_{1}, j \in G_{2}} w_{ij}.
\end{equation}
The maximum-cut (max-cut) problem is to find a cut whose weight is maximal among all possible cuts.
Although this problem has been proven to be NP-complete, it has many approximation algorithms. The currently best approximation algorithm (and assuming the unique games conjecture~\cite{khot2005unique,goemans1995improved}, also the best approximation algorithm possible with polynomial complexity) is a randomized algorithm by Goemans and Williamson~\cite{goemans1995improved}. This algorithm guarantees that the value of its returned cut is at least 0.87 of the optimal result with probability as high as required. For a precise formulation of the result see~\cite{goemans1995improved}.

There is an analogous formulation of the max-cut problem for partitioning a graph into $K$ sub-graphs, which is called max $K$-cut. In this case, a $K$-cut is a partition of $V$ into $K$ disjoint sets, that is, into the sets $G_{1},\ldots,G_{K}$ such that
\begin{equation*}
\bigcup_{k=1}^{K} G_{k}=V, \quad G_{i} \cap G_{j} = \emptyset,\qquad i \ne j.
\end{equation*}
The weight of the cut in this case is defined as
\begin{equation}\label{eq:kcut}
W(G_{1},\ldots,G_{K}) = \sum_{k=1}^{K} \sum_{i\in G_{k}, j\not \in G_{k}} w_{ij},
\end{equation}
that is, the weight of the cut is the sum of all the edges whose endpoint vertices are in two different sets.
Similarly to the Goemans-Williamson algorithm, there are algorithms~\cite{frieze1997improved} that can be applied in this case. The results in~\cite{frieze1997improved} also give lower bounds on the  approximation for different values of $ K $. For some large values of $ K $ the bounds are better than the 0.87 ratio of the $ K=2 $ case (e.g., as given in~\cite{frieze1997improved}, for $ K=10 $ the approximation ratio is better than $ 0.92 $).

Although the Goemans-Williamson algorithm finds only an approximate solution, there is a special family of graphs for which its solution is exact.
\begin{definition}\label{def:bipartite}
The graph $(V,E)$ is bipartite if its vertices can be partitioned into two disjoint sets such that every edge connects a vertex in the first set to a vertex in the second set.
\end{definition}
We next show in Lemma~\ref{lem:maxCut_twoSide} that for a bipartite graph, the Goemans-Williamson algorithm finds the optimal cut (and not an approximation of it). This lemma is used later to analyze the properties of our algorithm.
\begin{lemma} \label{lem:maxCut_twoSide}
The Goemans-Williamson algorithm finds the exact solution for the max-cut problem of a bipartite graph.
\end{lemma}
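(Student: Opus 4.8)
The plan is to unpack the two stages of the Goemans--Williamson algorithm—the semidefinite relaxation and the random-hyperplane rounding—and to show that on a bipartite graph both stages behave rigidly. First I would record the trivial but essential observation that for a bipartite graph the maximum cut value equals the total edge weight: the bipartition guaranteed by Definition~\ref{def:bipartite} places every edge across the cut, while no cut can exceed $\sum_{i<j} w_{ij}$. Thus it suffices to prove that the algorithm returns a cut of weight exactly $\sum_{i<j} w_{ij}$.

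Next I would analyze the SDP relaxation, in which each vertex $i$ is assigned a unit vector $v_i$ and the objective is $\frac{1}{2}\sum_{i<j} w_{ij}(1 - \ip{v_i}{v_j})$. Since $\ip{v_i}{v_j} \geq -1$ for unit vectors, each summand is at most $w_{ij}$, so the SDP value is at most the total edge weight; and the antipodal embedding $v_i = \pm e$ dictated by the bipartition attains this bound, so the SDP optimum equals $\sum_{i<j} w_{ij}$. The crucial step, and the one I expect to be the main obstacle, is to characterize \emph{all} optimal SDP solutions rather than only the obvious antipodal one, since the solver may return any maximizer. For this I would rewrite the optimality condition as $\sum_{i<j} w_{ij}(1 + \ip{v_i}{v_j}) = 0$; because every term is nonnegative (as $w_{ij}\geq 0$ and $\ip{v_i}{v_j} \geq -1$), each term must vanish, forcing $v_j = -v_i$ for every edge $(i,j)$. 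Hence in every optimal configuration the two endpoints of each edge are antipodal.

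Finally I would treat the rounding step. Given any optimal configuration $\{v_i\}$ with antipodal edge endpoints and a random Gaussian vector $r$ defining the separating hyperplane, an edge $(i,j)$ satisfies $\ip{v_j}{r} = -\ip{v_i}{r}$, so the rounded signs $\operatorname{sign}\ip{v_i}{r}$ and $\operatorname{sign}\ip{v_j}{r}$ are opposite whenever $\ip{v_i}{r} \neq 0$. Since $r$ is drawn from a continuous distribution, the event $\ip{v_i}{r} = 0$ has probability zero, so almost surely every edge is separated and the returned cut has weight exactly $\sum_{i<j} w_{ij}$. Combined with the first paragraph, this shows the algorithm recovers the exact max-cut. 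The delicate point throughout is the universality over SDP maximizers; the term-by-term nonnegativity argument is what makes the conclusion hold for whichever optimal vectors the relaxation happens to produce.
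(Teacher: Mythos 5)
Your proof is correct, and its core is the same as the paper's: both arguments boil down to showing that the Goemans--Williamson semidefinite relaxation is \emph{tight} on a bipartite graph, i.e.\ that the relaxed optimum coincides with the combinatorial max-cut value $\sum_{i<j} w_{ij}$. The paper works in the matrix formulation, bounding $\operatorname{trace}(W\Sigma)$ from below by $-2\sum_{i,j} a_{ij}$ via the constraint $|\sigma_{ij}|\le 1$ (extracted from the $2\times 2$ minors of a PSD matrix with unit diagonal) and exhibiting the rank-one $\bar{\Sigma}=\sigma^{\text{opt}}{\sigma^{\text{opt}}}^{T}$ that attains it; your constraint $\ip{v_i}{v_j}\ge -1$ for unit vectors is exactly the same fact in the vector-program language. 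Where you genuinely go beyond the paper is in the last two steps: you characterize \emph{all} SDP maximizers (term-by-term nonnegativity forces $v_j=-v_i$ across every edge, regardless of which optimum the solver returns), and you then verify that the random-hyperplane rounding almost surely separates antipodal endpoints, so the returned \emph{cut} --- not merely the SDP value --- is optimal. The paper's proof stops at showing that $\bar{\Sigma}$ achieves the minimum of the relaxation and does not address non-uniqueness of the optimizer or the rounding stage, so it implicitly assumes the algorithm lands on the rank-one solution and reads off the signs. Since the lemma is a statement about the algorithm's output, your extra care is not pedantry; it closes a small but real gap in the paper's argument, at the cost of a slightly longer proof. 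Both routes rely on the edge weights being positive (so that a vanishing term forces antipodality only across genuine edges), which is consistent with the paper's standing assumption on $w(v_i,v_j)$.
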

The proof of this lemma is based on some detailes of the Goemans-Williamson algorithm, and is given in Appendix~\ref{sec:proof_lem_twosided}.

\section{Algorithm description}\label{sec:alg}

Let $P_{i}$ and $P_{j}$ be two images generated according to~\eqref{eq:projection integral} (using the same $\phi$), and let the $ n $-dimensional Fourier transform $ \mathcal{F} $ be defined by
\begin{equation*}
\hat{f}(\xi) = \mathcal{F}(f)(\xi)=\int_{\R^n} e^{-i x \cdot \xi}f(x)dx, \quad \xi \in \mathbb{R}^{n}.
\end{equation*}
If we denote by $\hat{P}_{i}$ the two-dimensional Fourier transform of $P_{i}$ and by $\hat{\phi}$ the three-dimensional Fourier transform of $\phi$, then,  the Fourier projection-slice theorem~\cite{Natterer} implies that  $\hat{P}_i$ is the restriction of $\hat{\phi}$ to the plane spanned by the first two columns of $R_{i}$ of~\eqref{eq:projection integral}. Explicitly,
\begin{equation}\label{eq:slice theorem}
\hat{P}_{i}(\omega_{x},\omega_{y}) = \hat{\phi} \left (\omega_{x}
R_{i}^{(1)} + \omega_{y} R_{i}^{(2)} \right ),
\end{equation}
where $R_{i}^{(1)}$, $R_{i}^{(2)}$,
$R_{i}^{(3)}$ are the columns of the rotation matrix $R_{i}$. As a consequence of~\eqref{eq:slice theorem}, any two (Fourier-transformed) images $\hat{P}_i$ and $\hat{P}_j$ share a common line through the origin, namely, there exist unit vectors $c_{ij},c_{ji}\in\mathbb{R}^{2}$ such that $\hat{P}_i(\xi c_{ij}) = \hat{P}_j(\xi c _{ji})$ for any $\xi\in\mathbb{R}$. The vectors $c_{ij}$ and $c_{ji}$ are given explicitly~\cite{cryoeig} by
\begin{equation}\label{eq:cl equation}
c_{ij} = \left( \begin{array}{ccc} 1 & 0 & 0 \\ 0 & 1 & 0 \end{array} \right) R_{i}^T \frac{R_{i}^{(3)} \times R_{j}^{(3)}}{\|R_{i}^{(3)} \times R_{j}^{(3)}\|}, \quad
c_{ji} = \left( \begin{array}{ccc} 1 & 0 & 0 \\ 0 & 1 & 0 \end{array} \right) R_{j}^T \frac{R_{i}^{(3)} \times R_{j}^{(3)}}{\|R_{i}^{(3)} \times R_{j}^{(3)}\|}.
\end{equation}
A simple method to estimate the common lines from the Fourier transform of projection images~\eqref{eq:projection integral_noisy} is to find the two radial lines in the transformed images (passing through the origin) which have the highest correlation. This procedure is described in~\cite{singer2012center}. It is also described in~\cite{singer2012center} how to modify this procedure to handle the unknown shifts in~\eqref{eq:projection integral_noisy}.

If we lift the vectors $c_{ij}$ and $c_{ji}$ to $\mathbb{R}^{3}$ by zero padding, then it can be shown that for all $i$ and $j$ it holds that $R_{i} c_{ij} = R_{j} c_{ji}$ (see~\cite{Shkolnisky2012} for a detailed proof). Now, assume that we are given rotations $\tilde{R}_{i}$ and $\tilde{R}_{j}$, which are estimates of $R_{i}$ and $R_{j}$, as well as vectors $\tilde{c}_{ij}$ and $\tilde{c}_{ji}$, which are estimates of $c_{ij}$ and $c_{ji}$. Then, we define a similarity score for any pair of images $i$ and $j$ by $\|\tilde{R}_i \tilde{c}_{ij} - \tilde{R}_j \tilde{c}_{ji}\|$.
This score is $0$ if the common lines and the rotations are correct, and is small for small errors in the common lines or in the rotations.
The Least Unsquared Deviations (LUD) algorithm~\cite{wang2013orientation} finds rotations $\tilde{R}_i$, $i=1,\ldots,N$, that bring the score $ \sum_{i,j}\|\tilde{R}_{i} \tilde{c}_{ij} - \tilde{R}_{j} \tilde{c}_{ji}\|$ to a local minimum given only the common lines $ \tilde{c}_{ij}$. 

As the algorithm~\cite{wang2013orientation} assumes the homogeneous setting, namely, that all images were generated from the same underlying $\phi$, it cannot be directly applied to the heterogeneous setting. However, it can be applied to each of the (unknown) sets $G_{k}$ of~\eqref{eq:Gk}.

Consider a graph $(V,E)$ whose $i$'th vertex corresponds to the image $P_{i}$, and where any two vertices are connected by an edge. Our goal is to partition the vertices of the graph $(V,E)$ into the sets $G_{k}$ of~\eqref{eq:Gk}. Assume moreover that we are given weights for the edges $E$, and denote by $w_{ij}$ the weight of the edge between vertices $i$ and $j$. In this case, according to~\eqref{eq:kcut}, the weight of some cut $\tilde{G} = \left \{\tilde{G}_{1},\ldots,\tilde{G}_{K} \right \}$ is equal to
\begin{equation}\label{eq:WS}
\begin{aligned}
W(\tilde{G}) &= \sum_{k=1}^{K} \sum_{i\in \tilde{G}_{k}, j\not \in \tilde{G}_{k}} w_{ij} = \sum_{k=1}^{K} \left (\sum_{\substack{j=1 \\ i\in \tilde{G}_{k}}}^{N} w_{ij} - \sum_{i,j \in \tilde{G}_{k}} w_{ij} \right )\\
& = \sum_{i,j=1}^{N} w_{ij} - \sum_{k=1}^{K}\sum_{i,j\in \tilde{G}_{k}} w_{ij} = C - \sum_{k=1}^{K}\sum_{i,j\in \tilde{G}_{k}} w_{ij},
\end{aligned}
\end{equation}
where $C$ is the sum of all weights in the graph (independent of the partition). Thus, finding the cut that maximizes $W(\tilde{G})$ is equivalent to finding the cut that minimizes $\sum_{k=1}^{K}\sum_{i,j\in \tilde{G}_{k}} w_{ij}$. If we now set
\begin{equation}\label{eq:LUDscore}
w_{ij} = \|R_i c_{ij} - R_j c_{ji}\|,
\end{equation}
we get that $W(\tilde{G}) \ge 0$ for any partition $\tilde{G}$, and that $W(G_{1},\ldots,G_{K})=0$. In other words, the rotations $R_{1},\ldots,R_{N}$ and the partition $G=\left\{G_{1},\ldots,G_{K}\right\}$ are obtained as a solution to the optimization problem
\begin{equation}\label{eq:heteroScore}
\min_{\tilde{G},\tilde{R}} F(\tilde{G},\tilde{R}),
\end{equation}
where $ F $ is defined by
\begin{equation}\label{eq:Fdef}
F(\tilde{R},\tilde{G})=\sum_{k=1}^{K}\sum_{i,j\in \tilde{G}_{k}} \|\tilde{R}_i \tilde{c}_{ij} - \tilde{R}_j \tilde{c}_{ji}\|.
\end{equation}

The formulation~\eqref{eq:heteroScore} is used since it results in a generalization of a proven technique (namely, the LUD) to the heterogeneous case. Moreover, the resulting algorithm for the heterogeneous case uses all images at once to estimate a partition. Note however, that the choice of weights given by \eqref{eq:LUDscore} is not unique. For example, it can be replaced with any rotation invariant metric that is a function of $R_ic_{ij}$ and $R_jc_{ji}$, with only minor changes to subsequent proofs. One such metric is the squared distance which is related to the algorithms in~\cite{Shkolnisky2012, cryoeig}. We choose the LUD score in this paper since it uses only the common lines (and no prior model of the molecule), and it is robust to outliers~\cite{wang2013orientation}. Similarly, there are other possibilities for the objective function~\eqref{eq:heteroScore}, such as the maximum-likelihood score. The main advantage of \eqref{eq:heteroScore} is that it is amenable to a rigorous mathematical analysis due to its relations with the max-cut problem.

Below, we first propose a ``theoretical" algorithm that assumes to get as an input the estimated rotations $\tilde{R}_{i}$ of~\eqref{eq:projection integral_noisy} (approximations of the rotations $R_{i}$ of~\eqref{eq:projection integral_noisy}). This assumption is common in many existing algorithms, including~\cite{katsevich2013covariance,shatsky2010automated}. Given the estimated rotations and the estimated common lines, Algorithm~\ref{alg:theoretical} below builds the corresponding graph, and returns the classification found by the max-$ K $-cut algorithm.
Note that the shifts $ (\Delta x_i, \Delta y_i,0) $ in \eqref{eq:projection integral_noisy} affect Algorithm~\ref{alg:theoretical} only through the procedure of detecting common lines, and are handled as described in~\cite{singer2012center}. In Section~\ref{sec:proofs} we provide theoretical guarantees for this algorithm.

\begin{algorithm}
	\caption{Classification of heterogeneous data sets from common lines and rotations}
	\label{alg:theoretical}
	\begin{algorithmic}[1]
		\State {\bfseries Input:}\begin{tabular}[t]{ll}
			$\{\tilde{c}_{ij}\}_{i,j=1}^N$ & Estimated common lines between all images $P_{1},\ldots,P_{N}$.\\
			$\tilde{R}_{1},\ldots,\tilde{R}_{N}$ & Estimated rotations of all the images.\\
			$K$ & Number of groups.
		\end{tabular}
		\State {\bfseries Output:}\begin{tabular}[t]{ll}
			$\tilde{G}_{1},\ldots, \tilde{G}_{K}$ &  Estimated partition of the images into homogeneous groups.
		\end{tabular}
		\For {$i=1$ to $N$}
		\For {$j=1$ to $N$}
		\State $W_{ij} \gets \|\tilde{R}_i \tilde{c}_{ij} - \tilde{R}_j c_{ji} \|$ \Comment{$W$ is the graph to partition.}
		\EndFor
		\EndFor
		\State $\left [ \tilde{G}_{1},\ldots, \tilde{G}_{K}\right ] \gets \operatorname{max-K-cut}(W,K)$ \Comment{Apply max-$K$-cut.}\label{algstep:maxcut}
		\State $\tilde{G} =\{\tilde{G}_1, \ldots, \tilde{G}_K\}$
		\State\Return $\tilde{G}_{1},\ldots,\tilde{G}_{K}$
	\end{algorithmic}
\end{algorithm}
Next we bound the computational complexity of Algorithm~\ref{alg:theoretical}.
In Algorithm~\ref{alg:theoretical} we first construct the weights matrix $ W $ which costs $ \mathcal{O}(N^2) $ operations, and then solve the max-cut problem. The complexity of the Goemans-Williamson algorithm is dominated by the complexity of its underling SDP problem. Commonly used interior point methods have complexity of $\tilde{\mathcal{O}}(N^{3.5}) $~\cite{alizadeh1995interior}. Thus, the complexity of Algorithm~\ref{alg:theoretical} is $ \tilde{\mathcal{O}}(N^{3.5}+N^2) = \tilde{\mathcal{O}}(N^{3.5}) $.

Since the assumption of having accurate estimates $\tilde{R}_{i}$ does not always hold, and moreover, the optimization problem~\eqref{eq:heteroScore} is high-dimensional and non-convex, we propose the following descent procedure to find its minimum in practice. Start from an initial estimate for the classification $G$, and minimize~\eqref{eq:heteroScore} by minimizing alternatingly over the set of rotation $\tilde{R}=\{\tilde{R_i}\}_{i=1\ldots N}$ and the classification of the images $\tilde{G}$ as follows:
\begin{enumerate}
\item Find the minimizing $\tilde{R}$ for a given partition $\tilde{G}$ (known from the previous iteration), by applying the LUD algorithm~\cite{wang2013orientation} on each $\tilde{G}_{k}$, $k=1,\ldots,K$.
\item Given the new rotations, find a partition $\tilde{G}_1, \ldots, \tilde{G}_K$. This can be achieved approximately using the Frieze and Jerrum algorithm for Max-$K$-Cut~\cite{frieze1997improved} (or, for $ K =2 $, using the  Goemans-Williamson algorithm~\cite{goemans1995improved}).
\end{enumerate}
Once the rotations $\tilde{R}$ have been estimated in step 1, minimizing~\eqref{eq:WS} with weights given by~\eqref{eq:LUDscore} in step 2 is equivalent to maximizing~\eqref{eq:heteroScore}. Note that the objective in~\eqref{eq:WS} is exactly the one maximized by the solution to the max-cut problem applied to the graph $(V,E)$ defined above, with weights given by~\eqref{eq:LUDscore}. Optimization algorithms for steps 1 and 2 (minimizations over the rotations and the partitions) provide only approximate solutions, and thus, the solution to~\eqref{eq:heteroScore} is also only approximate. Nevertheless, we show later (Theorem~\ref{thm:aqu}) that in some cases we can bound the quality of this approximate solution. A detailed description of the algorithm is given in Algorithm~\ref{alg:rot_maxcut}.

As will be demonstrated in Section~\ref{sec:experimental_results}, Algorithm~\ref{alg:rot_maxcut} favors balanced classes. The reason for that is as follows. For a balanced partition, the number of edges in the induced cut is larger than for unbalanced partitions. For high levels of noise, the variability of the weights of the edges becomes smaller (loosely speaking meaning that the weights become ``similar to each other''). Thus, the score of the cut increases with the number of edges in it, which in turn implies favoring roughly balanced splits of the nodes.

\begin{algorithm}
	\caption{Reconstruction from heterogeneous data sets}
	\label{alg:rot_maxcut}
	\begin{algorithmic}[1]
	\State {\bfseries Input:}\begin{tabular}[t]{ll}
                                $\{\tilde{c}_{ij}\}_{i,j=1}^N$ & Estimated common lines between all images $P_{1},\ldots,P_{N}$.\\
                                $K$ & Number of groups.
                             \end{tabular}
	\State {\bfseries Output:}\begin{tabular}[t]{ll}
                                $\tilde{R}_{1},\ldots,\tilde{R}_{N}$ & Estimated rotations of all the images.\\
                                $\tilde{G}_{1},\ldots, \tilde{G}_{K}$ &  Estimated partition of the images into homogeneous groups.
                               \end{tabular}
		\Statex \(\triangleright\) \underline{Initialization}\medskip
        \State $\tilde{G}^{(0)}_{1} \gets \left \{1,\ldots,N\right \}$ \Comment{Start from an initial guess. For example, all images in $\tilde{G}^{(1)}_{1}$.}
        \For {$k=2$ to $K$}
        \State $\tilde{G}_{k}^{(0)} \gets \emptyset$
        \EndFor
        \State $n\gets 0$ \Comment{Iteration number.}
		\Repeat
            \State $n\gets n+1$
	        \Statex \(\triangleright\) \underline{LUD step}\medskip
            \For {$k=1$ to $K$}
            \State $C_{k}^{(n)} = \left \{ \tilde{c}_{ij} \ | \  i \in \tilde{G}_{k}^{(n-1)} \wedge j \in \tilde{G}_{k}^{(n-1)}\right \}$ \Comment{Construct common lines matrix for $\tilde{G}_{k}^{(n-1)}$.}
            \State $\{\tilde{R}_{i}^{(n)} |~ i \in \tilde{G}_{k}^{(n-1)}\} = \operatorname{LUD}(C_{k}^{(n)})$ \Comment{Find rotations for images in $\tilde{G}_{k}^{(n)}$ using LUD~\cite{wang2013orientation}.} \label{algstep:lud}
            \EndFor
            \State $\tilde{R}^{(n)} = \{\tilde{R}^{(n)}_1, \ldots, \tilde{R}^{(n)}_N\} $
            \If {$F(\tilde{R}^{(n)}, \tilde{G}^{(n-1)}) > F(\tilde{R}^{(n-1)}, \tilde{G}^{(n-1)})$} \Comment {$F$ is defined in \eqref{eq:Fdef}.}
	            \State {$\tilde{R}^{(n)} = \tilde{R}^{(n-1)} $}
            \EndIf

        \Statex \(\triangleright\) \underline{max-cut step}\medskip
        \State $\left [ \tilde{G}_{1}^{(n)},\ldots, \tilde{G}_{K}^{(n)}\right ] \gets $ Algorithm\_\ref{alg:theoretical}$ (\{\tilde{c}_{ij}\},\tilde{R}^{(n)}) $ \label{algstep:alg1_in _alg2}
        \State $\tilde{G}^{(n)} =\{\tilde{G}^{(n)}_1, \ldots, \tilde{G}^{(n)}_K\}$
        \If {$F(\tilde{R}^{(n)}, \tilde{G}^{(n)}) > F(\tilde{R}^{(n)}, \tilde{G}^{(n-1)})$}
	        \State {$\tilde{G}^{(n)} = \tilde{G}^{(n-1)} $}
        \EndIf
		\Until{$\left|F(\tilde{R}^{(n)}, \tilde{G}^{(n)}) - F(\tilde{R}^{(n-1)}, \tilde{G}^{(n-1)})\right| \leq \delta$
}  \Comment{$\delta$ is a small constant.}

        \State\Return $\tilde{R}_{1}^{(n)},\ldots,\tilde{R}_{N}^{(n)}$ and $\tilde{G}_{1}^{(n)},\ldots,\tilde{G}_{K}^{(n)}$
	\end{algorithmic}
\end{algorithm}

\section{Convergence and error bounds}\label{sec:proofs}
For simplicity of the presentation, we assume in this section that $K = 2$, as all arguments are easily extended to any $K$. We will show that Algorithm~\ref{alg:rot_maxcut} stops in finite time, and that it is stable around its optimum, namely, that for ``good" initial conditions, Algorithm~\ref{alg:theoretical} finds an accurate class partition with high probability, or, equivalently, that when the initial partition and rotations are close to the optimum, step~\ref{algstep:alg1_in _alg2} of Algorithm~\ref{alg:rot_maxcut} finds an accurate class partition with high probability.

We start by establishing the finite-time convergence of Algorithm~\ref{alg:rot_maxcut}.
\begin{theorem} \label{thm:conv_to_min}
The score $ F $ in Algorithm~\ref{alg:rot_maxcut} converges monotonically to a minimum or to a saddle point.
\end{theorem}
\begin{proof}
Since in each iteration of Algorithm~\ref{alg:rot_maxcut} we do not increase the value of $F(R,G)$ from~\eqref{eq:Fdef}, we have that $F(R^{(n)},G^{(n)}) \geq F(R^{^{(n+1)}},G^{^{(n+1)}})$, and thus $F(R^{(n)},G^{(n)})$ is monotonically non-increasing and bounded by zero.
\end{proof}

Theorem~\ref{thm:conv_to_min} shows that Algorithm~\ref{alg:rot_maxcut} converges, and moreover, that its underlying score~\eqref{eq:Fdef} is monotonically non-increasing. Since the stopping criteria of Algorithm~\ref{alg:rot_maxcut} is that the difference between the scores in successive  steps is smaller than some value, this means that the algorithm terminates in finite time. Note that the convergence of the score $F$ of~\eqref{eq:Fdef} does not necessarily imply that the rotations~$R$ and the partition~$G$ converge. For example, the rotations and the partition are not even unique for a given score, as the rotations are unique only up to a global rotation and handedness, and any permutation of the classes in the partition will give the same score.

Next, we derive error bounds for Algorithm~\ref{alg:theoretical}. We start by proving that if the common lines are detected correctly, and the rotations are assigned correctly, then the algorithm finds the correct partition $\left \{ G_{1},G_{2} \right \}$.

\begin{theorem}
Suppose that the rotations $\tilde{R}_{1},\ldots,\tilde{R}_{N}$ and the common lines $\tilde{c}_{ij}\in\mathbb{R}^{2}$ in the input of
Algorithm~\ref{alg:theoretical} are correct when the images $P_{i}$ and $P_{j}$ are in the same class, and are uniformly distributed otherwise. Then Algorithm~\ref{alg:theoretical} will find the correct class partitioning.
\end{theorem}
\begin{proof}
For the correct rotations $R_{1},\ldots,R_{N}$, we have that for any two images $P_{i}$ and $P_{j}$ in the same class it holds that $\|R_i c_{ij} - R_j c_{ji} \| = 0$. For $P_{i}$ and $P_{j}$ in different classes, the score $\|R_i c_{ij} - R_j c_{ji} \|$ is a random variable that gets the value $0$ with probability $0$. Thus, if we consider the graph whose adjacency matrix $W$ is given by $W_{ij}= \|R_i c_{ij} - R_j c_{ji} \|$, we get a bipartite graph, since all images from the same class are not connected (connected with weight $0$) and images from different classes are connected with some random weight. By Lemma~\ref{lem:maxCut_twoSide}, the Goemans-Williamson algorithm returns the correct partition for this graph.
\end{proof}

In order to show that for common lines and rotations ``close" to the correct ones Algorithm~\ref{alg:theoretical} still finds a partition that is ``close'' to the correct one, we need to define what are ``close" common lines and rotations, as well as what are ``close'' partitions.
\begin{definition}\label{def:cl_dist}
Let $c_{ij}$ and $c_{ji}$ be the correct common line between images $P_i$ and $P_j$, as defined by~\eqref{eq:cl equation}. Let $\tilde{c}_{ij}$ and $\tilde{c}_{ji}$ be some estimates of $c_{ij}$ and $c_{ji}$, respectively.  We define the distance between $c_{ij}$ and $\tilde{c}_{ij}$ and between $c_{ji}$ and $\tilde{c}_{ji}$ as the angle between them, namely,
\begin{equation}
\begin{array}{c}
d(c_{ij}, \tilde{c}_{ij}) = \arccos \ip{c_{ij}}{\tilde{c}_{ij}},\\
d(c_{ji}, \tilde{c}_{ji}) = \arccos \ip{c_{ji}}{\tilde{c}_{ji}},
\end{array}
\end{equation}
where $\ip{\cdot}{\cdot}$ is the standard dot product in $\mathbb{R}^{2}$ (and $ \arccos  $ is measured in radians).
\end{definition}

Unless otherwise stated, all matrix norms below refer to the induced 2-norm $\left \lVert \cdot \right \rVert_{2}$.
\begin{lemma}\label{lem:tri_so3}
	If $\|\tilde{R}_1 - R_1\|<\varepsilon$ and $\|\tilde{R}_2 -  R_2\|<\varepsilon$, then $\|\tilde{R}_1\tilde{R}_2 - R_1 R_2\|<2\varepsilon$.
\end{lemma}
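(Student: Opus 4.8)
The plan is to use a telescoping (add-and-subtract) decomposition combined with the triangle inequality and submultiplicativity of the operator norm, together with the fact that rotation matrices have unit operator norm. Specifically, I would write the difference $\tilde{R}_1\tilde{R}_2 - R_1 R_2$ as a sum of two terms, each of which isolates the error in exactly one of the two factors, and then bound each term separately using the given hypotheses $\|\tilde{R}_1 - R_1\| < \varepsilon$ and $\|\tilde{R}_2 - R_2\| < \varepsilon$.

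Concretely, the first step is the algebraic identity
\begin{equation*}
\tilde{R}_1\tilde{R}_2 - R_1 R_2 = \tilde{R}_1\tilde{R}_2 - \tilde{R}_1 R_2 + \tilde{R}_1 R_2 - R_1 R_2 = \tilde{R}_1(\tilde{R}_2 - R_2) + (\tilde{R}_1 - R_1)R_2.
\end{equation*}
Next I would apply the triangle inequality for the operator norm to split this into $\|\tilde{R}_1(\tilde{R}_2 - R_2)\| + \|(\tilde{R}_1 - R_1)R_2\|$, and then use submultiplicativity, $\|AB\| \le \|A\|\,\|B\|$, to obtain the upper bound $\|\tilde{R}_1\|\,\|\tilde{R}_2 - R_2\| + \|\tilde{R}_1 - R_1\|\,\|R_2\|$. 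The final step invokes the fact that for any $R \in SO(3)$ the induced $2$-norm equals $1$ (since $R$ is orthogonal, all its singular values are $1$), so $\|\tilde{R}_1\| = \|R_2\| = 1$. Substituting the hypothesized bounds then gives $\|\tilde{R}_2 - R_2\| + \|\tilde{R}_1 - R_1\| < \varepsilon + \varepsilon = 2\varepsilon$, which is the claimed inequality.

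I do not expect any serious obstacle here, as the statement is a standard perturbation estimate for products of orthogonal matrices. The only point requiring a moment of care is the choice of which factor to split on (i.e., which matrix to ``freeze'' in each term); one can equally well add and subtract $R_1\tilde{R}_2$ instead of $\tilde{R}_1 R_2$, and the bound is identical by symmetry. The single nontrivial ingredient is recognizing that $\|R\| = 1$ for $R \in SO(3)$ in the induced $2$-norm, which is what prevents the two error terms from being amplified; without unit norm we would only get a bound proportional to the operator norms of the matrices involved. Since the paper has already fixed all norms to be the induced $2$-norm, this fact applies directly and the proof is a short three-line computation.
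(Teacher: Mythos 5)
Your proposal is correct and follows exactly the same route as the paper's proof: insert $\tilde{R}_1 R_2$ by adding and subtracting, apply the triangle inequality, and use submultiplicativity together with $\|R\|=1$ for $R\in SO(3)$ in the induced $2$-norm. There is nothing to add or correct.
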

\begin{proof}
From the triangle inequality and the fact that $\| R\|=1$ for any $R \in SO(3)$, we get that $ \| \tilde{R}_1\tilde{R}_2 - R_1 R_2 \| = \| \tilde{R}_1\tilde{R}_2 - \tilde{R}_1 R_2 + \tilde{R}_1 R_2 - R_1 R_2 \| \leq  \| \tilde{R}_1 (\tilde{R}_2 - R_2) \|+ \|(\tilde{R}_1 - R_1 ) R_2 \| <2\varepsilon$.
\end{proof}
Next, we define the quality of a partition of a heterogeneous data set.
\begin{definition}\label{def:good_class}
A partition of $P_{1},\ldots,P_{N}$ into the $K$ sets $\tilde{G}_{i}, i = 1,\ldots, K$, is called $p$-precise for class $k$, $k=1, \ldots, K$, if $\abs{\tilde{G}_{k} \cap G_{k}}/\abs{\tilde{G}_{k}} \ge p$, where $G_{k}$ is defined in~\eqref{eq:Gk}, and $\abs{G}$ is the number of elements in the set $G$. A partition is $p$-precise if there is a permutation of the indexes $ 1,\ldots, K  $ such that it is $p$-precise for all $k$.
\end{definition}
In other words, a partition of the images $P_{1},\ldots,P_{N}$ is $p$-precise if the ratio between the number of correct images in class $k$ and the total number of images assigned to class $k$ is at least $p$. Note that the correct partition is $1$-precise, and a random partition is about $\frac{1}{K}$-precise.

Theorem~\ref{thm:aqu} below shows that Algorithm~\ref{alg:theoretical} gives ``good'' results if started from a ''good'' initial state. For ease of notation, Theorem~\ref{thm:aqu} is proven for the case $K=2$, and we moreover assume that the correct partition satisfies $\abs{G_{1}}=\abs{G_{2}}=N/2$.

\begin{theorem}\label{thm:aqu}
Let $P_{1},\ldots,P_{N}$ be $N$ images comprising a heterogeneous data set corresponding to $K=2$. Let $c_{ij}$ and $c_{ji}$ be the common lines between $P_{i}$ and $P_{j}$, and let $\tilde{c}_{ij}$ and $\tilde{c}_{ji}$ be some estimate of $c_{ij}$ and $c_{ji}$ used as the input to Algorithm~\ref{alg:theoretical}. Also, let $R_{i}$ be the rotation corresponding to $P_{i}$ (see~\eqref{eq:projection integral_noisy}), and let $\tilde{R}_{i}$ be the rotation used as the input of Algorithm~\ref{alg:theoretical}. Let $\varepsilon>0$ and assume that
\begin{enumerate}
\item $\|R_{i} - \tilde{R}_{i}\| \leq \varepsilon$, $i=1,\ldots,N$.
\item If $C(i)=C(j)$ then $d(c_{ij},\tilde{c}_{ij}) \leq \varepsilon$, $i,j=1,\ldots,N$, $i \neq j$.
\item If $C(i)\not = C(j)$ then $\tilde{c}_{ij}$, $i,j=1,\ldots,N$, $i \neq j$, is uniformly distributed and independent of the rotations.
\end{enumerate}
Then, for a sufficiently large $N$, with high probability, Algorithm~\ref{alg:theoretical} results in at least $0.87-3.555\varepsilon$-precise partition (according to Definition~\ref{def:good_class}).
\end{theorem}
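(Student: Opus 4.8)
The plan is to compare the graph $W$ built in the max-cut step with the idealized bipartite graph in which all intra-class weights vanish, and to convert the Goemans--Williamson $0.87$ guarantee into a precision bound by means of a concentration argument for the random cross-class weights.

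First I would pin down the intra-class (same-class) edges deterministically. For $i,j$ with $C(P_i)=C(P_j)$ the exact data satisfy $R_i c_{ij}=R_j c_{ji}$, so writing $w_{ij}=\|\tilde R_i\tilde c_{ij}-\tilde R_j\tilde c_{ji}\|$ and inserting $\pm R_i c_{ij}$ and $\pm R_j c_{ji}$, the triangle inequality (in the spirit of Lemma~\ref{lem:tri_so3}) together with $\|\tilde R_i-R_i\|\le\varepsilon$, $\|R_i\|=1$, $\|\tilde c_{ij}\|=1$ and $\|\tilde c_{ij}-c_{ij}\|=2\sin(d(c_{ij},\tilde c_{ij})/2)\le d(c_{ij},\tilde c_{ij})\le\varepsilon$ gives $w_{ij}\le 4\varepsilon$. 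Thus $W$ is a $4\varepsilon$-perturbation (in every entry) of a bipartite graph whose two sides are exactly $G_1,G_2$; on that bipartite graph Lemma~\ref{lem:maxCut_twoSide} would return the correct $1$-precise cut, and it is precisely the nonzero intra-class weights that break bipartiteness and force us back onto the general $0.87$ guarantee, which is why $0.87$ appears in the bound.

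Next I would control the cross-class (different-class) edges. For $C(P_i)\ne C(P_j)$, $\tilde R_i\tilde c_{ij}$ and $\tilde R_j\tilde c_{ji}$ are unit vectors on two great circles and, since $\tilde c_{ij},\tilde c_{ji}$ are independent and uniform, $\E[w_{ij}^2]=\E\|\tilde R_i\tilde c_{ij}-\tilde R_j\tilde c_{ji}\|^2=2$; because $w_{ij}\le 2$ this also yields $\mu_{ij}:=\E[w_{ij}]\ge 1$. The common lines across distinct pairs are independent and the weights lie in $[0,2]$, so Hoeffding together with a union bound over the $2^{N}$ cuts shows that, for $N$ large, with high probability the total cross-class weight carried by \emph{every} cut equals its mean up to $O(N^{3/2})=o(N^2)$. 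Hence, up to negligible relative error, the weight of any cut is governed by the number of cross-class edges it cuts (each worth $\approx\mu$) plus an intra-class term of size at most $4\varepsilon$ per cut edge.

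With these two facts I would parametrize the output of the max-cut step: let $\tilde G_1$ contain a fraction $x$ of $G_1$ and a fraction $y$ of $G_2$ (so $|G_1|=|G_2|=N/2$ gives $a=xN/2$, $b=yN/2$). A direct count shows the number of cut cross-class pairs is $(N^2/4)(x+y-2xy)$, while the number of cut intra-class pairs is at most $N^2/8$, and the true partition cuts all $N^2/4$ cross-class pairs. Feeding these into $W(\tilde G)\ge 0.87\,W_{\mathrm{opt}}\ge 0.87\,W(G_1,G_2)$ and using the concentration above yields, after dividing by $\mu N^2/4$, the scalar inequality $x+y-2xy\ge 0.87-O(\varepsilon)$ (the $O(\varepsilon)$ coming from the $4\varepsilon$ intra-class weights divided by $\mu\ge1$). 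Finally, the level set $\{x+y-2xy\ge 0.87-O(\varepsilon)\}$ is confined to small neighborhoods of $(1,0)$ and $(0,1)$, and a short two-variable optimization of the precision $\min\{x/(x+y),\,(1-y)/(2-x-y)\}$ over this set (choosing the better of the two labelings) lower-bounds the precision by $0.87-O(\varepsilon)$; tracking every constant---the factor $4$ in the intra-class bound, the bound $\mu\ge1$, the concentration slack, and the $(x+y-2xy)\mapsto\text{precision}$ conversion---produces the stated $0.87-\tfrac{63}{4}\varepsilon$.

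The main obstacle is the concentration step, not the algebra. Because the cut returned by Goemans--Williamson is itself a function of the random weights, and because the per-edge means $\mu_{ij}$ need not be identical, one cannot simply substitute each $w_{ij}$ by its mean inside the chosen cut. The crux is therefore a uniform-over-all-cuts law of large numbers for the $\sim N^2$ independent common-line weights; this is what both forces the ``sufficiently large $N$'' hypothesis and downgrades the conclusion to ``with high probability.'' Once it is in place, the triangle-inequality bound of the first step and the two-variable optimization of the last step are routine.
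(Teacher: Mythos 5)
Your proposal follows essentially the same route as the paper: bound the intra-class weights by $4\varepsilon$ via the triangle inequality and Lemma~\ref{lem:tri_so3}; treat the cross-class weights as independent random variables concentrated around their mean uniformly over all $2^{N}$ cuts; parametrize a candidate cut by the two fractions $(p_1,p_2)$ so that its weight is governed by $M^2(p_1+p_2-2p_1p_2)$ cross-edges plus at most $M^2/2$ intra-edges of weight $\le 4\varepsilon$ each; feed this into the Goemans--Williamson $0.87$ guarantee; and finish by showing that $p_1+p_2-2p_1p_2\ge 0.87-O(\varepsilon)$ forces the stated precision. Your Hoeffding-plus-union-bound over the $2^{N}$ cuts is, if anything, cleaner than the paper's Lemma~\ref{lem:max2n_normal}, and you correctly identify the uniform-over-cuts concentration as the crux, since the returned cut depends on the random weights.

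The one place the argument does not deliver the theorem \emph{as stated} is the constant. The error term in the final scalar inequality is of the form $21\varepsilon/\mu$, where $\mu$ is the mean cross-class edge weight (the intra-class $4\varepsilon$'s and the $\pm 2\varepsilon$ rotation perturbations all get divided by this mean). Your bound $\mu\ge 1$, derived from $\E[w_{ij}^2]=2$ and $w_{ij}\le 2$, yields only a $0.87-21\varepsilon$-precise partition; to reach $0.87-\tfrac{63}{4}\varepsilon$ one needs $\mu=\tfrac{4}{3}$ exactly, which the paper obtains by identifying the cross-class weight with the distance between two uniformly random points on the sphere (Lemma~\ref{lem:rand_dist_sphere}, $E(X')=4/3$, so $21/E(X')=63/4$). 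Your closing claim that ``tracking every constant produces $0.87-\tfrac{63}{4}\varepsilon$'' is therefore not supported by the estimates you actually state. A second, smaller imprecision: the level set $\{p_1+p_2-2p_1p_2\ge 0.87\}$ is \emph{not} confined to small neighborhoods of $(1,0)$ and $(0,1)$ (it contains, e.g., $(0.87,0)$); the statement that is actually needed, and that both you and the paper ultimately use, is that on the complement of the set of $\delta$-precise partitions one has $p_1+p_2-2p_1p_2\le\delta$, so the level-set condition forces $\delta$-precision. With those two points repaired, your argument coincides with the paper's proof.
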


Assumption 1 in Theorem~\ref{thm:aqu} states that Algorithm~\ref{alg:theoretical} is initialized with rotations that are close to the true ones. Assumption 2 states that common lines between images of the same class are detected with small error. Assumption 3 states that common lines between images of different classes (that therefore have no common line) are uniformly random. This  uniformity assumption
enables us to put a rigorous bound on the classification error. In real world applications, this assumption is not necessarily valid.  In case of
non-uniform common lines errors, the proof will follow similarly, provided that it is
possible to bound the expected value of the score between two images of  different
classes, and show that it is not ``too small".

The proof of Theorem~\ref{thm:aqu} consists of the following steps:
\begin{enumerate}
	\item Give an upper bound for the weight $w_{ij}$ (defined in the beginning of Section~\ref{sec:preliminaries}) for two images of the same class (Lemma~\ref{lem:weight_same_class} below).
	\item Estimate the distribution of the weight $w_{ij}$ for two images of different classes.
	\item Show that the score~\eqref{eq:Fdef} for the correct partition (with the given $ \tilde{c}_{ij} $ and $ \tilde{R}_i $) cannot be too small.
	\item For any given cut, estimate the score as a function of the precision of the cut.
	\item Estimate the maximal score~\eqref{eq:Fdef} over all ``bad" partitions.
	\item Show that with high probability the maximal score over all ``bad" partitions is lower then $ 0.87 $ times the score of the correct partition.
	\item Since Goemans-Williamson algorithm guarantees a partition with score higher than 0.87 times the score of the correct partition, we deduce that the algorithm cannot return a ``bad" partition.
\end{enumerate}

\begin{lemma}\label{lem:weight_same_class}
	Following the assumptions of Theorem~\ref{thm:aqu}, suppose that $P_{i}$ and $P_{j}$ are in the same class. Then, for
	\begin{equation}\label{eq:epsilon_ij_def}
	\epsilon_{ij}=\|\tilde{R}_{i} \tilde{c}_{ij} - \tilde{R}_{j} \tilde{c}_{ji} \|,
	\end{equation}
	it holds that $0 \leq \epsilon_{ij} \leq 4 \varepsilon$
	
\end{lemma}
\begin{proof}
Due to the invariance of the 2-norm to orthogonal transformations, we have that
	\begin{equation}\label{eq:eps_bound}
	\begin{aligned}
	\|\tilde{R}_i \tilde{c}_{ij} - \tilde{R}_j \tilde{c}_{ji} \|  & = \|\tilde{c}_{ij} - \tilde{R}_i^{-1} \tilde{R}_j \tilde{c}_{ji} \| \\
	& = \|\tilde{c}_{ij} - c_{ij} + c_{ij} - \tilde{R}_i^{-1} \tilde{R}_j \tilde{c}_{ji} + (\tilde{R}_i^{-1} \tilde{R}_j c_{ji} - \tilde{R}_i^{-1} \tilde{R}_j c_{ji}) \| \\
	& \leq \|\tilde{c}_{ij} - c_{ij} \| + \| c_{ij} - \tilde{R}_i^{-1} \tilde{R}_j \tilde{c}_{ji}  + \tilde{R}_i^{-1} \tilde{R}_j c_{ji} - \tilde{R}_i^{-1} \tilde{R}_j c_{ji} \| \\
	& \leq \| \tilde{c}_{ij} - c_{ij} \| + \| c_{ij} - (\tilde{R}_i^{-1} \tilde{R}_j) (R_i^{-1} R_j)^{-1} (R_i^{-1} R_j) c_{ji} \|
	+ \|\tilde{R}_i^{-1} \tilde{R}_j (\tilde{c}_{ji} - c_{ji}) \| \\
	& = \| \tilde{c}_{ij} - c_{ij} \| + \| c_{ij} - (\tilde{R}_i^{-1} \tilde{R}_j) (R_i^{-1} R_j)^{-1} c_{ij} \| + \|\tilde{c}_{ji} - c_{ji} \| \\
	& \leq 4 \varepsilon,
	\end{aligned}
	\end{equation}
	where the last inequality follows from assumptions 1 and 2 in Theorem~\ref{thm:aqu}, together with Lemma~\ref{lem:tri_so3} and the fact that $ R_i^{-1}R_jc_{ji} = c_{ij} $.
\end{proof}

\begin{proof}[ proof of Theorem~\ref{thm:aqu}]
Let $(V,E)$ be a graph whose vertex $v_{i}\in V$ corresponds to the image $P_{i}$, $i=1,\ldots,N$, and whose (undirected) edge $(v_{i},v_{j})\in E$ has weight $\|\tilde{R}_i \tilde{c}_{ij} - \tilde{R}_j \tilde{c}_{ji} \|$. A score of a partition is the value of the function $ F $ of \eqref{eq:Fdef}, that is, the sum of the weights of all the edges that connect nodes from one class to nodes from other classes.

For $P_{i}$ and $P_{j}$ in the same class, namely, $C(i)=C(j)$,  by Lemma~\ref{lem:weight_same_class} and using the notation in~\eqref{eq:epsilon_ij_def}, we have that $0 \leq \epsilon_{ij} \leq 4 \varepsilon$.
	
Next, if $C(i)\ne C(j)$, we denote $X_{ij}=\|\tilde{R}_{i} \tilde{c}_{ij} - \tilde{R}_{j} \tilde{c}_{ji} \|$. Also, we denote $X'_{ij}=\|R_{i} \tilde{c}_{ij} - R_{j} \tilde{c}_{ji} \|$, that is, $X'_{ij}$ is defined using the correct rotations.
Note that both $X_{ij}$ and $X'_{ij}$ are random variables. Intuitively, we are going to show that $X'_{ij}$ is ``much larger'' than $\epsilon_{ij}$ with high probability, and that $X_{ij}$ is ``close'' to $X'_{ij}$. Thus, to maximize the cut of the graph $(V,E)$, images $P_{i}$ and $P_{j}$ of the same class (satisfying $C(i)=C(j)$) for which $\epsilon_{ij}$ is small should be assigned to the same subset of the partition.

Since we assume that the common lines $\tilde{c}_{ij}$ and $\tilde{c}_{ji}$ for $i$ and $j$ such that $C(i)\ne C(j)$ are uniformly random and independent of the rotations, the weights $X'_{ij}$ are i.i.d and distributed as the distance between two random unit vectors in $\mathbb{R}^3$, whose distribution is analyzed in~\cite{solomon1978geometric}.

To bound $X_{ij}$, we start by noting that
\begin{equation}\label{eq:Xij_more_terms}
X_{ij} = \|\tilde{R_i}\tilde{c}_{ij} - \tilde{R_j}\tilde{c}_{ji}\| = \|\tilde{R_i}\tilde{c}_{ij} - R_i \tilde{c}_{ij} + R_i \tilde{c}_{ij} - R_j \tilde{c}_{ji} + R_j \tilde{c}_{ji} - \tilde{R_j}\tilde{c}_{ji}\|,
\end{equation}

and that
\begin{multline*}
\| R_i \tilde{c_{ij}} - R_j \tilde{c}_{ji}\|  - \|\tilde{R_i}\tilde{c}_{ij} - R_i \tilde{c}_{ij}\| - \| R_j \tilde{c}_{ji} - \tilde{R_j}\tilde{c}_{ji}\|\leq\\
\underbrace{\| R_i \tilde{c}_{ij} - R_j \tilde{c}_{ji}  + \tilde{R_i}\tilde{c}_{ij} - R_i \tilde{c}_{ij} +  R_j \tilde{c}_{ji} - \tilde{R_j}\tilde{c}_{ji}\|}_{X_{ij} \mbox{ as in \eqref{eq:Xij_more_terms} with reordered terms}}
= \underbrace{\|\tilde{R_i}\tilde{c}_{ij} - R_i \tilde{c}_{ij} + R_i \tilde{c}_{ij} - R_j \tilde{c}_{ji} + R_j \tilde{c}_{ji} - \tilde{R_j}\tilde{c}_{ji}\|}_{X_{ij}\mbox{ as in \eqref{eq:Xij_more_terms}} } \\
\leq  \| R_i \tilde{c}_{ij} - R_j \tilde{c}_{ji}\|  + \|\tilde{R_i}\tilde{c}_{ij} - R_i \tilde{c}_{ij}\| + \| R_j \tilde{c}_{ji} - \tilde{R_j}\tilde{c}_{ji}\|,
\end{multline*}
where the first inequality is due to the reverse triangle inequality and the last is due to the triangle inequality.
From assumption 1 in Theorem~\ref{thm:aqu} it follows that $\|\tilde{R_i}\tilde{c}_{ij} - R_i \tilde{c}_{ij} \| = \|(\tilde{R_i} - R_i) \tilde{c}_{ij}\| \leq \|\tilde{R_i} - R_i\| \leq \varepsilon $,
and thus we have
\begin{equation}\label{eq:X_bound_X'}
	X'_{ij} - 2\varepsilon \leq X_{ij} \leq X'_{ij} + 2\varepsilon.
\end{equation}

Next, we analyze the score of an arbitrary partition of $V$, and show that with high probability the score of ``bad" partitions is low and of ``good" partitions is high. Thus, we get a bound for how ``bad" the partition generated by Algorithm~\ref{alg:theoretical} can be. Let $G_{1}$ and $G_{2}$ be the sets defined in~\eqref{eq:Gk}, and let $\tilde{G}_{1}$ and $\tilde{G}_{2}$ be some partition of the graph $(V,E)$. Also, recall that we assume that $K=2$ and that $\abs{G_{1}}=\abs{G_{2}}=N/2$. We denote $M=N/2$.
In our notation, $G_{1}$ and $G_{2}$ is the ground truth partition of the graph, and $\tilde{G}_{1}$ and $\tilde{G}_{2}$ is the cut returned by our algorithm. We denote
\begin{equation}\label{eq:Gkl}
G_{k,l} = \tilde{G}_{k} \cap G_{l}, \quad k,l=1,2,
\end{equation}
and so
\begin{equation}\label{eq:Gkl_details}
\begin{alignedat}{4}
 &\abs{G_{1,1}}=\abs{\tilde{G}_{1} \cap G_{1}} &=& p_{1}M, &\quad  &\abs{G_{1,2}}=\abs{\tilde{G}_{1} \cap G_{2}} &=& p_{2}M, \\
 &\abs{G_{2,1}}=\abs{\tilde{G}_{2} \cap G_{1}} &=& (1-p_{1})M, &\quad  &\abs{G_{2,2}}=\abs{\tilde{G}_{2} \cap G_{2}} &=& (1-p_{2})M,
\end{alignedat}
\end{equation}
where $0 \leq p_{1},p_{2} \leq 1$ and we have used the fact that $G_{1} \cup G_{2} = V$ and $G_{1} \cap G_{2} = \emptyset$. Note that $ p_1 = 1,~p_2 = 0 $ and $p_1 = 0,~p_2 = 1 $ corresponds to the correct partition. We thus get that
\begin{equation*}
\abs{\tilde{G}_{1}} = (p_{1}+p_{2})M, \quad \abs{\tilde{G}_{2}}= (2-p_{1}-p_{2})M.
\end{equation*}
Using the notation of~\eqref{eq:Gkl}, $\tilde{G}_{1}$ is our ``estimate'' for $G_{1}$, and the subsets $G_{1,1}$ and $G_{1,2}$ are the subsets of $\tilde{G}_{1}$ that were assigned ``correctly'' and ``incorrectly'', respectively. The case for $G_{2,1}$ and $G_{2,2}$ is analogous.

\begin{figure}
	\centering
	\begin{tikzpicture}
	\node at (0,3.9) {$\tilde{G}_{1}$};
	\node at (3,3.9) {$\tilde{G}_{2}$};
	\draw[densely dotted] (1.5,-0.5) -- (1.5,4);
	
	\draw [decorate,decoration={brace,amplitude=6pt},xshift=-4pt,yshift=0pt] (0.0,-0.1) -- (0.0,1.3) node [black,midway,xshift=-0.6cm] {$G_{1,1}$};
	\node at (0,0) {\textbullet};
	\node at (0,0.4) {\textbullet};
	\node at (0,0.9) {$\vdots$};
	\node at (0,1.2) {\textbullet};		
	\draw [decorate,decoration={brace,amplitude=6pt},xshift=-4pt,yshift=0pt] (0.0,1.9) -- (0.0,3.3) node [black,midway,xshift=-0.6cm] {$G_{1,2}$};
	\node at (0,2) {\textbullet};
	\node at (0,2.4) {\textbullet};
	\node at (0,2.9) {$\vdots$};
	\node at (0,3.2) {\textbullet};	
	
	\draw [decorate,decoration={brace,amplitude=6pt,mirror,raise=4pt},yshift=0pt] (3,-0.1) -- (3,1.3) node [black,midway,xshift=0.8cm] {$G_{2,1}$};		
	\node at (3,0) {\textbullet};
	\node at (3,0.4) {\textbullet};
	\node at (3,0.9) {$\vdots$};
	\node at (3,1.2) {\textbullet};	
	\draw [decorate,decoration={brace,amplitude=6pt,mirror,raise=4pt},yshift=0pt] (3,1.9) -- (3,3.3) node [black,midway,xshift=0.8cm] {$G_{2,2}$};		
	\node at (3,2) {\textbullet};
	\node at (3,2.4) {\textbullet};
	\node at (3,2.9) {$\vdots$};
	\node at (3,3.2) {\textbullet};	
	
	\draw [decorate,decoration={brace,amplitude=6pt},xshift=-4pt,yshift=0pt] (-1.0,-0.3) -- (-1.0,1.7) node [black,midway,xshift=-0.8cm] {$G_{1}$};
	\draw [decorate,decoration={brace,amplitude=6pt},xshift=-4pt,yshift=0pt] (-1.0,1.7) -- (-1.0,3.6) node [black,midway,xshift=-0.8cm] {$G_{2}$};

	\draw (0.1,0) -- (2.9,0) node[pos = 0.65, anchor=north ] { $ \sim \epsilon_{ij}$};
	\draw (0.1,0) -- (2.9,2) node[pos = 0.65, anchor=north,  sloped ] {$\sim X_{ij}$};
	\end{tikzpicture}
	\caption{Calculating the weight of a cut $\tilde{G}_{1}$, $\tilde{G}_{2}$. The weight of an edge between vertex $i$ in $G_{1,1}$ and vertex $j$ in $G_{2,1}$ is the random variable $\epsilon_{ij}$. The weight of an edge between vertex $i$ in   $G_{1,1}$ and vertex $j$ in $G_{2,2}$ is the random variable $X_{ij}$.}
	\label{fig:prob_plot}
\end{figure}
		
The weight of a cut $\{ \tilde{G}_{1}, \tilde{G}_{2} \}$ corresponding to the parameters $p_{1}$ and $p_{2}$ can be expressed in terms of $X_{ij}$ and $\epsilon_{ij}$ as follows.  It equals to the sum of weights of edges between $G_{1,1}$ and $G_{2,1}$ (with weight $\epsilon_{ij}$), between  $G_{1,1}$ and $G_{2,2}$ (with weight $X_{ij}$), between $G_{1,2}$ and $G_{2,1}$ (with weight $X_{ij}$), and between $G_{1,2}$ and $G_{2,2}$ (with weight $\epsilon_{ij}$). A graphical illustration of this setup is given in Figure~\ref{fig:prob_plot}. Formally, by denoting $S(p_{1},p_{2}) = W(\tilde{G}_{1},\tilde{G}_{2})$ (see~\eqref{eq:cut weight}), we have that
\begin{align}
	S(p_1,p_2) &= \sum_{i \in G_{1,1}}\left(\sum_{j \in G_{2,1}}\epsilon_{ij} + \sum_{j \in G_{2,2}} X_{ij}\right)  + \sum_{i \in G_{1,2}}\left( \sum_{j \in G_{2,2}} \epsilon_{ij} + \sum_{j \in G_{2,1}} X_{ij}\right) \notag \\
	&=	\sum_{(i,j) \in B_{X}} X_{ij} + \sum_{(i,j) \in B_{\epsilon}}\epsilon_{ij} \label{eq:split S} \\
&\leq \sum_{(i,j) \in B_{X}} (X'_{ij}+2\varepsilon) + |B_{\epsilon}| 4\varepsilon, \label{eq:cut_score}
\end{align}
where
\begin{equation}\label{eq:B_def}
B_{X} = \left ( G_{1,1} \times G_{2,2} \right ) \cup \left ( G_{1,2} \times G_{2,1} \right ), \quad B_{\epsilon} = \left ( G_{1,1} \times G_{2,1} \right )\cup \left ( G_{1,2} \times G_{2,2}\right),
\end{equation}
and~\eqref{eq:cut_score} was derived using~\eqref{eq:eps_bound} and~\eqref{eq:X_bound_X'}.
Note that
\begin{equation}\label{eq:B_sizes}
|B_{X}| = M^2 (p_1 + p_2 - 2p_1p_2), \quad |B_{\epsilon}| =  M^2 (p_1(1-p_1) + p_2(1-p_2)).
\end{equation}
By~\eqref{eq:split S} and~\eqref{eq:X_bound_X'}, the value of $S(p_1,p_2)$ for $p_1 = 0$ and $p_2 = 1 $ (or symmetrically for $p_1 = 1$ and $p_2 = 0 $) satisfies
\begin{equation}\label{eq:score_01}
	S(0,1) \geq \left[\sum_{	(i,j) \in B_{X}} X'_{ij}  \right]- 2\varepsilon M^2,
\end{equation}
where we have used the fact that in this case $\abs{B_{X}}= M^2$ and $\abs{B_{\epsilon}}=0$. The expected value of the right hand side of~\eqref{eq:score_01} is $M^2 \E(X') - 2\varepsilon M^2$, where $X'$ denotes any of the i.i.d random variables $X'_{ij}$, and $ \E $ denotes the expected value of a random variable. By Chebyshev's inequality,
\begin{equation}\label{eq:score_good}
P\left(S(0,1) < M^2 \E(X') - 2\varepsilon M^2 - kM \sigma(X')\right) < \frac{1}{k^2}.
\end{equation}
Choosing $k = \sqrt{M}$ we have
\begin{equation}\label{eq:S01 lower bound}
S(0,1) \ge M^2 \E(X') - 2\varepsilon M^2 - M^{3/2} \sigma(X'),
\end{equation}
with probability that converges to $1$ as $M\to \infty$. Note that the choice $k = \sqrt{M}$ is rather arbitrary, as for our purpose $ k $ can be any function of $ M $ as long as $ k /M \to 0$ as $ k\to \infty $.
	
Next, we denote by $U\subset [0,1]^2$ the set of pairs $(p_1,p_2)$ such that $p_{1}M$ and $p_{2}M$ are integers, and for any $0 \leq \delta \leq 1$, we denote by $U_\delta$ the subset of $U$ such that $p_{1}\geq \delta, 1-p_{2}\geq \delta  \mbox{  or  }  1- p_1\geq\delta, p_2\geq\delta $ (each $(p_{1},p_{2}) \in U_{\delta}$ corresponds to a partition that is $\delta$-precise in~\eqref{eq:Gkl_details}).

Since the Goemans-Williamson algorithm returns with high probability a cut whose score is at least 0.87 of the maximal cut, it returns (with high probability) a score higher than $ 0.87 $ times the score of $ S(0,1) $. Thus, if we show that the maximal score of a non-$\delta$-precise partition is less than 0.87 times the correct partition ($ S(0,1) $), then such a partition cannot be returned. 
In other words, with high probability Algorithm~\ref{alg:theoretical} returns a $\delta$-precise partition. An illustration of this argument is given in Figure \ref{fig:ilustration_proof_logic}.
\begin{figure}
	\begin{equation*}
	\left.\begin{array}{c}
	\mbox{GW returned score} \geq 0.87 \times \mbox{maximal score} \geq 0.87 S(0,1) \\
	\\
	
	\mbox{We will show that: } \\0.87 S(0,1) > \max(\mbox{score of non-}\delta\mbox{ precise partition})
	\end{array}
	\right\}
	\Rightarrow
	\mbox{GW return-}\delta\mbox{ precise partition}
	\end{equation*}
	\caption{Illustration of the idea of the proof}
	\label{fig:ilustration_proof_logic}
\end{figure}
Formally, we need to show that
\begin{equation} \label{eq:sucsess_ineq}
P \left( 0.87 S(0,1) < \max\limits_{(p_1,p_2) \in U \backslash U_\delta} S(p_1,p_2) \right) \ll 1.
\end{equation}
Thus, we will focus on finding the maximal $\delta$ such that with high probability
\begin{equation}\label{eq:delta condition}
0.87 S(0,1) \ge \max\limits_{(p_1,p_2) \in U \backslash U_\delta} S(p_1,p_2).
\end{equation}
	
We next estimate the maximum score over all cuts with $(p_{1},p_{2}) \in U \backslash U_\delta$. Since there are $2^{2M}$ possible cuts $(\tilde{G}_{1}, \tilde{G}_{2})$, we have that $|U \backslash U_\delta| \leq |U| = 2^{2M}$ (note that we assumed that $|G_1|= |G_2| = M$, but Algorithm~\ref{alg:theoretical} can return $\tilde{G}_1$ and $\tilde{G}_2$ of any size). Using~\eqref{eq:cut_score} and Lemma~\ref{lem:max2n_bounded} in Appendix~\ref{sec:distributions} (with $ a=2 $) we get that 
\begin{equation}\label{eq:score_of_bad}
\begin{array}{lll}
\max\limits_{(p_1,p_2) \in U \backslash U_\delta} S(p_1,p_2) & < &\max\limits_{(p_1,p_2) \in U \backslash U_\delta}  \sum\limits_{(i,j) \in B_{X}} (X'_{ij}+2\varepsilon) + |B_{\epsilon}| 4\varepsilon \\ &< &\max\limits_{(p_1,p_2) \in U \backslash U_\delta}\Big\{M^2\left[ (p_1 + p_2 - 2p_1p_2)(\E(X') + 2\varepsilon) + (p_1(1-p_1) + p_2(1-p_2)) 4\varepsilon\right] \\ & &
+2\sqrt{\frac{\log(|U \backslash U_\delta|)}{2}} \sqrt{M^2(p_1 + p_2 - 2p_1p_2)} \Big\}\\
&< & \max\limits_{(p_1,p_2) \in U \backslash U_\delta} \Big\{M^2\left[ (p_1 + p_2 - 2p_1p_2)(\E(X') + 2\varepsilon) + (p_1(1-p_1) + p_2(1-p_2)) 4\varepsilon\right] \\ & &
+M\sqrt{4M} \sqrt{(p_1 + p_2 - 2p_1p_2)}\Big\}
\end{array}
\end{equation}
with probability that converges to $1$ as $M\to \infty$.
If we now require that
\begin{multline} \label{eq:max_to_all_ineq_withM}
0.87 \left(M^2 \E(X') - 2\varepsilon M^2 -kM\sigma(X')\right) \geq \\ \max_{(p_1,p_2) \in U \backslash U_\delta}
M^2 [ (p_1 + p_2 - 2p_1p_2)(\E(X') + 2\varepsilon) + (p_1(1-p_1) + p_2(1-p_2)) 4\varepsilon ] \\+M\sqrt{4M} \sqrt{(p_1 + p_2 - 2p_1p_2)},
\end{multline}
then, by~\eqref{eq:S01 lower bound} and~\eqref{eq:score_of_bad} we get that~\eqref{eq:delta condition} holds, and so does~\eqref{eq:sucsess_ineq}, as required.
Additionally, for a large enough $M$, we neglect all terms that are not $O(M^2)$ in~\eqref{eq:max_to_all_ineq_withM}, and thus we require
\begin{multline} \label{eq:max_to_all_ineq}
0.87 \left(M^2 \E(X') - 2\varepsilon M^2 \right) \geq \\ \max_{(p_1,p_2) \in U \backslash U_\delta}
M^2 [ (p_1 + p_2 - 2p_1p_2)(\E(X') + 2\varepsilon) + (p_1(1-p_1) + p_2(1-p_2)) 4\varepsilon ].
\end{multline}

To sum up the proof thus far, we showed above that if $\delta$ is such that $0.87 S(0,1) > \max\limits_{(p_1,p_2) \in U \backslash U_\delta} S(p_1,p_2)$ with probability (denoted henceforth by) $ \rho $ that converges to $1$ as $M \to \infty$, then with probability $ \rho $ Algorithm~\ref{alg:theoretical} returns a $\delta$-precise partition.
We also showed that if~\eqref{eq:max_to_all_ineq} holds, then $0.87 S(0,1) > \max\limits_{(p_1,p_2) \in U \backslash U_\delta} S(p_1,p_2) $ with probability $ \rho $. Thus, for any $\delta$ such that~\eqref{eq:max_to_all_ineq} holds, Algorithm~\ref{alg:theoretical} returns a $\delta$-precise partition with probability $ \rho $ that converges to $1$ as $M\to \infty$.
 To find such a $\delta$, we rewrite~\eqref{eq:max_to_all_ineq} as
\begin{equation*}
0.87 \E(X') \geq  \max_{(p_1,p_2) \in U \backslash U_\delta}
(p_1 + p_2 - 2p_1p_2)\E(X') + \varepsilon (6 (p_1 + p_2) - 4(p_1^2 + p_2^2 + p_1p_2) +1.74),
\end{equation*}
and since $(6 (p_1 + p_2) - 4(p_1^2 + p_2^2 + p_1p_2) +1.74)\leq 4.74$ 
, we get,
\begin{equation*}
0.87 \E(X') \geq  \max_{(p_1,p_2) \in U \backslash U_\delta}
(p_1 + p_2 - 2p_1p_2)\E(X') + 4.74\varepsilon,
\end{equation*}
or
\begin{equation*}
0.87 - \frac{ 4.74\varepsilon}{\E(X')} \geq  \max_{(p_1,p_2) \in U \backslash U_\delta} (p_1 + p_2 - 2p_1p_2).
\end{equation*}
Since $\E(X')$ is the expectancy of the distance between two uniformly distributed random points on the unit sphere, which is equal to $\frac{4}{3}$~\cite{solomon1978geometric}, we have,
\begin{equation} \label{eq:quality_inequality2}
		0.87 - 3.555\varepsilon \geq \max_{(p_1,p_2) \in U \backslash U_\delta} (p_1 + p_2 - 2p_1p_2).
\end{equation}
That is, for any $\delta$ such that~\eqref{eq:quality_inequality2} holds, Algorithm~\ref{alg:theoretical} returns a $\delta$-precise partition with probability $ \rho $.

It is easy to see that the set
\begin{equation*}
\mathcal{U_\delta} = [0,1]^2 \backslash \left\lbrace (p_1,p_2)\in [0,1]^2 \ | \  p_{1}\geq \delta, 1-p_{2}\geq \delta  \text{ or }  1- p_1\geq\delta, p_2\geq\delta \right\rbrace
\end{equation*}
satisfies that $U\backslash U_{\delta} \subset \mathcal{U_\delta}$, and that  the maximum of $p_1 + p_2 - 2p_1p_2$ in $ \mathcal{U_\delta}$ is $\delta$ and is achieved on the boundary of $ \mathcal{U_\delta}$. Hence, $\max_{(p_1,p_2) \in U \backslash U_\delta} (p_1 + p_2 - 2p_1p_2) \leq \delta$. Thus, for any $\delta$ such that $0.87 - 3.555\varepsilon \geq \delta  $ we have that~\eqref{eq:quality_inequality2} holds,
and so we have a $\delta$-precise partition, with probability $ \rho $ that converges to $1$ when $N \to \infty$.
The largest $\delta$ for which the latter condition holds is $\delta=0.87 - 3.555\varepsilon$, and so Algorithm~\ref{alg:theoretical} returns a partition which is at least $0.87 - 3.555\varepsilon$ precise.
\end{proof}

The proof of the more general case $\abs{G_{1}} \neq \abs{G_{2}}$ follows the same steps, but with different constants appearing in the derivation starting with equation~\eqref{eq:B_sizes} and on. The case where $K \neq 2$ would require to change the notation in~\eqref{eq:Gkl} and the score $S$ in~\eqref{eq:split S}, but otherwise the proof remains conceptually the same.

\section{Experimental Results}
\label{sec:experimental_results}
In this section, we show results of Algorithm~\ref{alg:rot_maxcut} for simulated data sets (Section~\ref{subsec:sim}), and compare its performance with RELION~\cite{scheres2012relion,scheres2016processing} (Section~\ref{subsec:compareRelion}). In both cases, the performance is measured by counting the correctly and incorrectly classified images.

In the experiments we quantify the level of noise in the images using the Signal to Noise Ratio (SNR), defined by
\begin{equation}\label{SNR_def}
\mbox{SNR} = \frac{\mbox{var}(\phi_{av})}{\mbox{var}(noise)},
\end{equation}
where $ \Phi_{av}$ is the average volume and var is the variance of a signal. Since the noise we add to the images is Gaussian with zero mean and standard deviation $\sigma$, the SNR~\eqref{SNR_def} becomes
\begin{equation*}
\mbox{SNR} = \frac{\mbox{Power}(\phi_{av})}{\sigma^2},
\end{equation*}
or, for any given SNR level, the noise added to the images Gaussian with standard deviation
\begin{equation*}
\sigma = \sqrt{\frac{\mbox{Power}(\phi_{av})}{\mbox{SNR}}}.
\end{equation*}

In~\cite{katsevich2013covariance,anden2017structural}, the authors define a ``heterogeneous SNR" ($ \mbox{SNR}_{het}$) differently than~\eqref{SNR_def}. 
For the particular case described in Section~\ref{subsec:sim}, $ \mbox{SNR}_{het} \approx 0.5 \, \mbox{SNR}$.
\subsection{Implementation Notes}\label{sec:implementation_notes}
While in the analysis of Algorithm~\ref{alg:rot_maxcut} we use the Goemans-Williamson algorithm for the max-cut problem, for which the best performance bound for a polynomial time algorithm can be proven, its results can be improved in practice by applying the  following simple heuristic: start with the output of the Goemans-Williamson algorithm as an initial guesses for the cut, make a local search around the guess, and return the best cut detected.  This heuristic is described in Algorithm~\ref{alg:flip}. This heuristic can be further improved by starting from multiple initial guesses, one of which is the output of the Goemans-Williamson algorithm. For simplicity of notation, Algorithm~\ref{alg:flip} is presented for the case of two classes. A similar algorithm for $K>2$ is easily deduced. In the experiments below, after running the Goemans-Williamson algorithm, we ran Algorithm~\ref{alg:flip} with three initializations - one is the output of the GW algorithm and the other two are random initializations. Since Algorithm~\ref{alg:flip} only improves the score, these initializations guarantee a final cut with the same bound as Goemans-Williamsons algorithm for $ K=2 $ (or the Frieze-Jerrum algorithm for $ K>2 $). We noticed that the  naive approach of Algorithm~\ref{alg:flip} improves in some cases the results.

All the experiments were executed in MATLAB on a computer with two Intel Xeon X5560 CPUs running at 2.8GHz and an nVidia GTX TITAN 1080 GPU. The GPU was used only for the common lines search.

\begin{algorithm}
	\caption{The ``flip" algorithm for the max-cut problem}
	\label{alg:flip}
	\begin{algorithmic}[1]
		\State {\bfseries Input:}\begin{tabular}[t]{ll}
			$W$ & weights matrix of size $N\times N$.\\
			$ v $ &  vector with values $ \pm 1 $ that represent an initial classification.
		\end{tabular}
		\State {\bfseries Output: }{$v = (v_1, \ldots, v_N)$ with $v_i = 1$  if node $i$ is assigned to class 1, and $v_i = -1$ otherwise.}
		\Statex \(\triangleright\) \underline{Initialization}\medskip
		\State $score \gets v^tWv$ \Comment{Equivalent to  $\sum_{i,j:v_i \neq v_j} W_{ij}$.}
		\Repeat
		\State $score\_improved \gets false$
		\For {$i=1$ to $N$}
		\State $v_i \gets -v_i$ \Comment{Flipping assignment of node $i$.}
		\If {$v^tWv > score$}
		\State $score \gets v^tWv$
		\State $score\_improved \gets true$
		\Else
		\State $v_i = -v_i$
		\EndIf
		\EndFor
		\Until{$score\_improved == false$}
		\State\Return $v$
	\end{algorithmic}
\end{algorithm}

\subsection{Simulated molecules ($K=2,3$)} \label{subsec:sim}

To evaluate the performance of Algorithm~\ref{alg:rot_maxcut}, we first applied it to simulated heterogeneous data sets at various levels of noise. 
The heterogeneous data sets for the experiment were generated as follows. First, we created two three-dimensional volumes (molecules) corresponding to two types of molecules. The first molecule was a known density map of the 50S subunit of the E. coli ribosome, and the second was its perturbed version created by adding a small ball. The two three-dimensional volumes were chosen deliberately to have similar structures. Visualizations of the three-dimensional volumes are given in Figure~\ref{fig:vol1vol2}. Then, we generated 2500 noiseless projection images of each of the volumes, using uniformly distributed random orientations. The resulting 5000 images, each of size $65 \times 65$ pixels, comprised our noiseless heterogeneous data set. Finally, for each level of noise, we added to each image in the noiseless data set additive white Gaussian noise at the given noise level, and applied our algorithm to the resulting noisy data set. Note that in this case the two classes have equal sizes with $N_{1}=N_{2}=2500$. As mentioned above,  we used for the max-cut step the Goemans-Williamson algorithm and then improved its output using Algorithm~\ref{alg:flip} with 3 initial starting points (see Section~\ref{sec:implementation_notes}).

The results of the experiments are summarized in Table~\ref{tab:results_2classes_balanced}. Each row in the table corresponds to an experiment at a fixed SNR, and shows the number of images assigned to each class, the percentage of correctly detected common lines (defined as common lines whose estimated locations deviate by up to $10^{\circ}$ from their true locations known from the simulation), and the precision of the partition measured according to Definition~\ref{def:good_class}. To illustrate the SNR values used, we show in Figure~\ref{fig:mole_with_noise} a clean image and its noisy realizations at different levels of noise. In Figure~\ref{fig:sym_2classes_balanced_results} we show the reconstructed volumes using the noisy images and the estimated rotations and class partitions.

\begin{figure}
	\centering
	\includegraphics[width=0.4\textwidth]{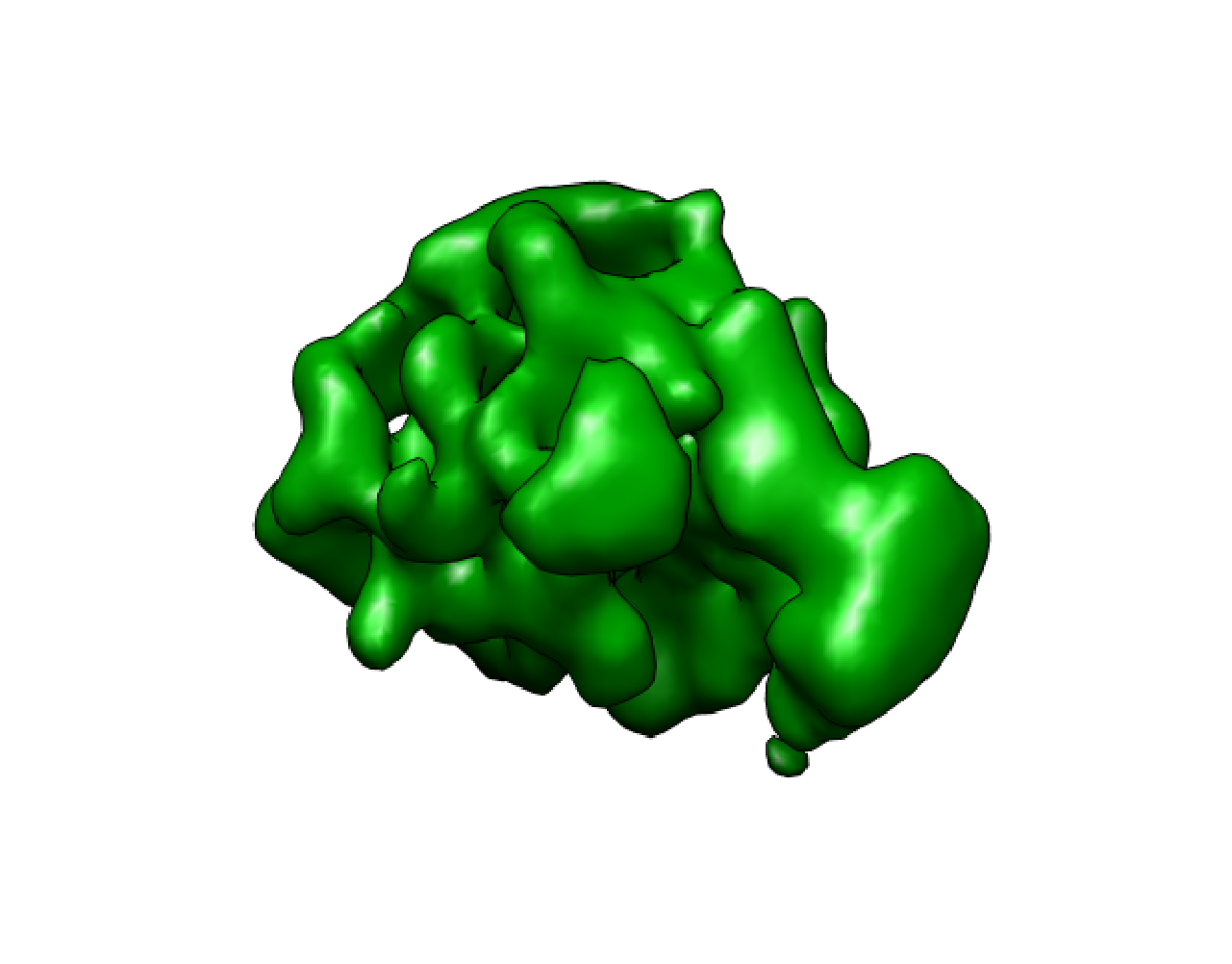}		\includegraphics[width=0.4\textwidth]{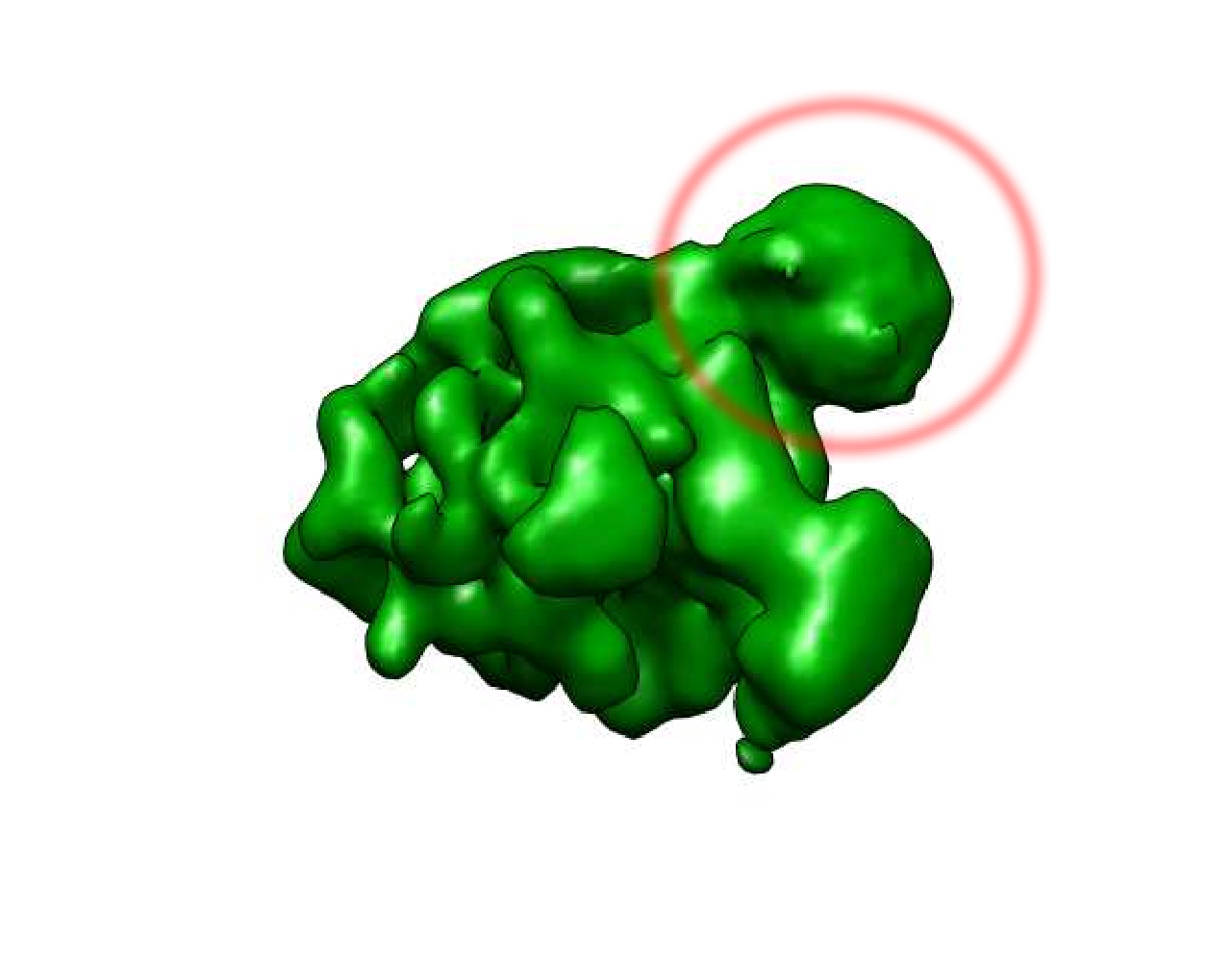}
	\caption{Volumes used in the $K=2$ simulated data experiment.}
	\label{fig:vol1vol2}
\end{figure}

\begin{figure}
	\centering
	\subfloat[Without noise]{
		\includegraphics[width=0.15\textwidth]{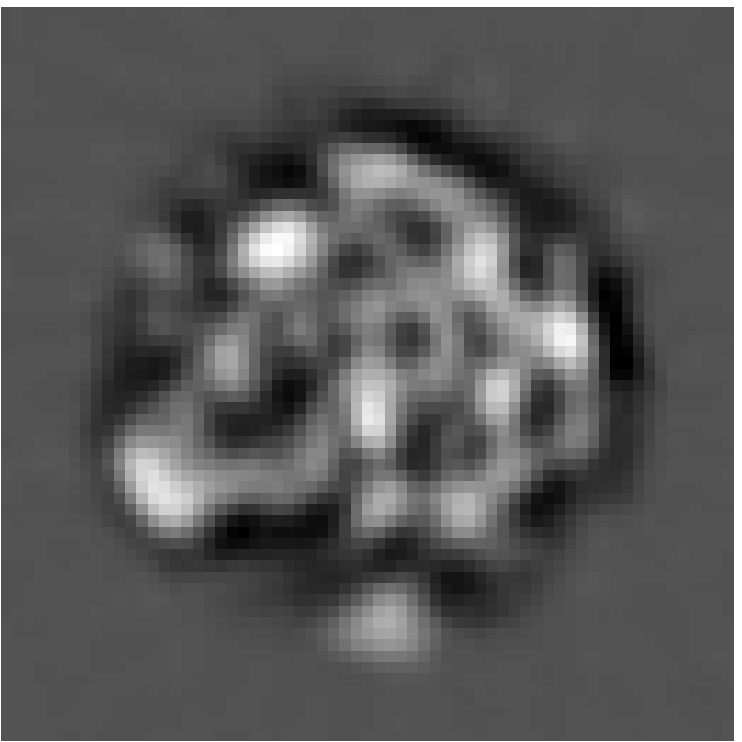}
		\label{fig:mol1_2d_SNR_inf}
    }
	\subfloat[SNR = 1]{
		\includegraphics[width=0.15\textwidth]{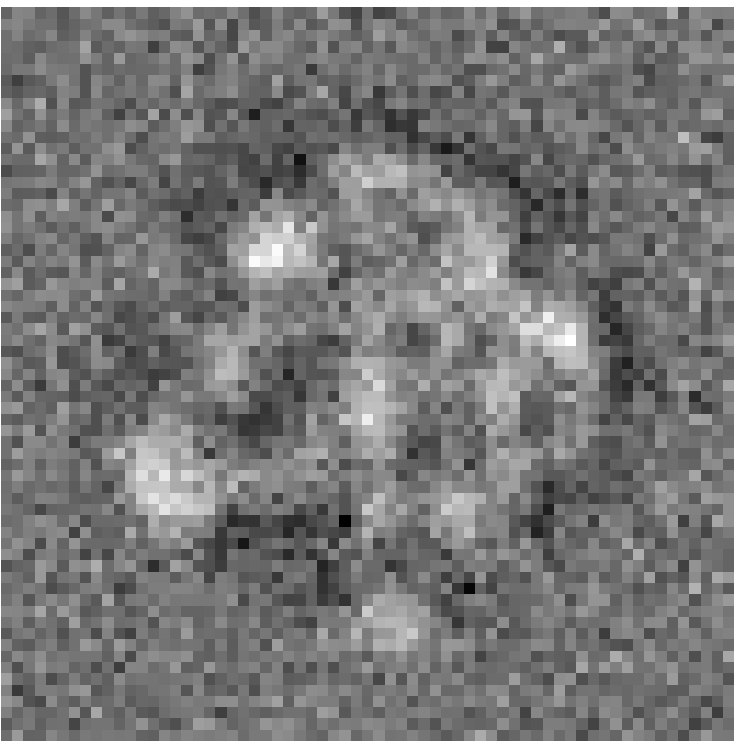}
		\label{fig:mol1_2d_SNR100}
	}
	\subfloat[SNR = 0.5]{
		\includegraphics[width=0.15\textwidth]{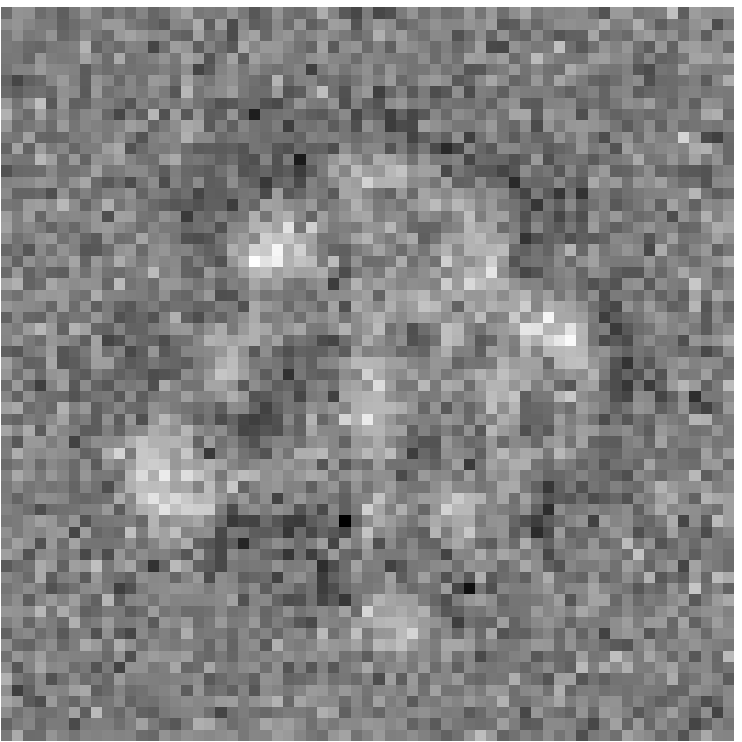}
		\label{fig:mol1_2d_SNR050}
	}
	\subfloat[SNR = 0.15]{
		\includegraphics[width=0.15\textwidth]{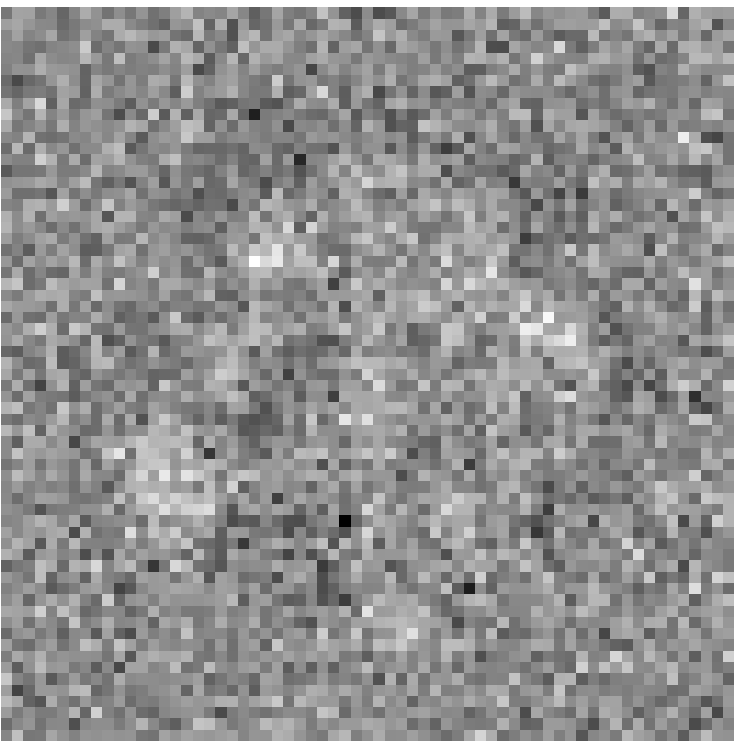}
		\label{fig:mol1_2d_SNR015}
	}
	\subfloat[SNR = 0.1]{
		\includegraphics[width=0.15\textwidth]{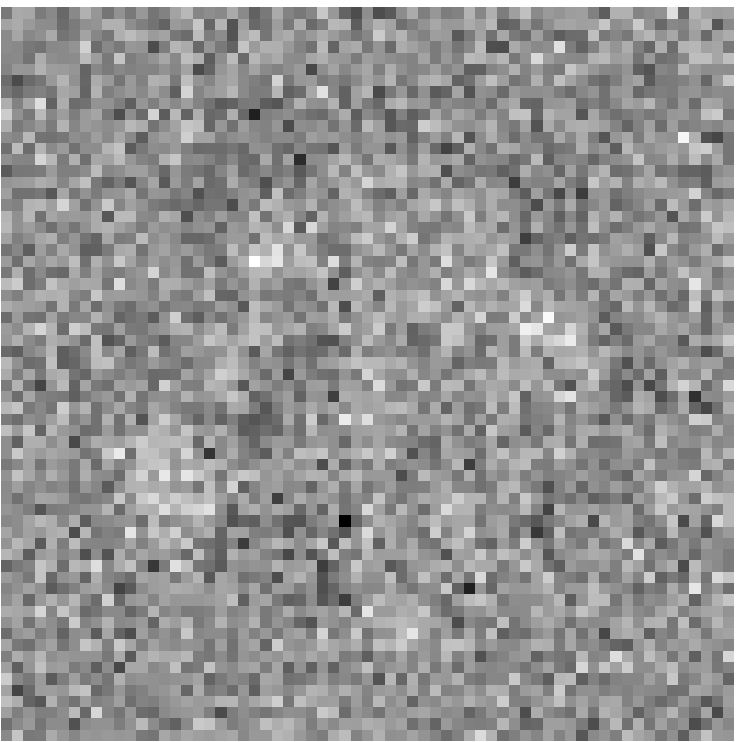}
		\label{fig:mol1_2d_SNR010}
	}
	\subfloat[SNR = 0.05]{
		\includegraphics[width=0.15\textwidth]{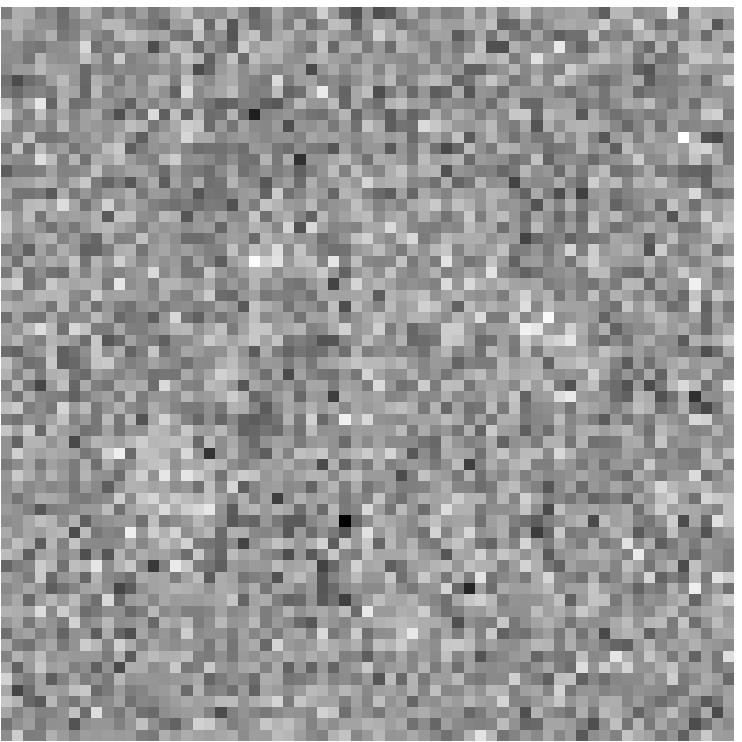}
		\label{fig:mol1_2d_SNR005}
	}
	
	\caption{A clean projection image and its noisy realizations at different levels of noise. Each image is of size $65 \times 65$ pixels.}
	\label{fig:mole_with_noise}
\end{figure}
\renewcommand{\arraystretch}{1.2}
\begin{table}
\begin{center}
		\begin{tabular}{|c|c|cc|c|c|c|}
			\hline
			\multirow{2}{*}{SNR} & \multirow{2}{55pt}{\centering{Estimated class}} & \multicolumn{2}{c|}{Correct class} & \multirow{2}{*}{Class size} & \multirow{2}{*}{Precision} &  \multirow{2}{80pt}{\centering{\% correct common lines}} \\ \cline{3-4}
							     & & class 1 & class 2 & & & \\ \hline
		
			\multirow{2}{*}{1} & class 1 & 2500 & 0 & 2500 &\multirow{2}{*}{1}& \multirow{2}{*}{91.48\%}\\
							   & class 2 & 0 & 2500 & 2500 & &\\
			\hline
			\multirow{2}{*}{0.5} & class 1 & 2500 & 0 & 2500 &\multirow{2}{*}{1}& \multirow{2}{*}{76.15\%}\\
								 & class 2 & 0 & 2500 & 2500 & &\\
			\hline
			\multirow{2}{*}{0.15} & class 1 &  2397 & 12    & 2409 &\multirow{2}{*}{0.960}& \multirow{2}{*}{38.07\%}\\
								  & class 2 & 103    & 2488 & 2591 & &\\
			\hline
			\multirow{2}{*}{0.1} & class 1 & $2137$ & $253$ & $2390$ &\multirow{2}{*}{0.861}& \multirow{2}{*}{23.46\%}\\
								 & class 2 &  $363$ & $2247$ & $2610$ & &\\
			\hline
			\multirow{2}{*}{0.05} & class 1 & $2084$ & $277$  & $2361$ &\multirow{2}{*}{0.846}& \multirow{2}{*}{ 15.86\%}\\
								  & class 2 & $416$  & $2232$ & $2639$ & &\\
			\hline
			\multirow{2}{*}{0.02} & class 1 & 1555  & 848  & 2403 &\multirow{2}{*}{0.636}& \multirow{2}{*}{ 3.11\%}\\
								  & class 2 & 945  & 1652 & 2597  & & \\
			\hline
			& & 2500 & 2500 & & &\\
			\hline
		\end{tabular}
		\caption{Results of Algorithm~\ref{alg:rot_maxcut} for balanced classes. For each SNR, the table presents a confusion matrix that shows how many images were assigned correctly (e.g. images belonging to class 1 that were assigned to class 1) and incorrectly (images belonging to class 1 that were assigned to class 2 or vice versa). The ``Class size" column shows the sizes of the estimated class 1 and 2 for each SNR. The precision column is calculated following Definition~\ref{def:good_class}, and the ``\% correct common lines'' is the percentage of the common lines detected within $10^{\circ}$ of their correct location. }
		\label{tab:results_2classes_balanced}
\end{center}	
\end{table}

\begin{figure}
	\begin{center}
		\begin{tabular}{l >{\centering\arraybackslash}m{0.15\textwidth}>{\centering\arraybackslash}m{0.15\textwidth}>{\centering\arraybackslash}m{0.15\textwidth}>{\centering\arraybackslash}m{0.15\textwidth}>{\centering\arraybackslash}m{0.15\textwidth}}
			& Original & SNR=0.15 & SNR=0.1 & SNR=0.05 & SNR = 0.02\\
			Class 1 & \includegraphics[width=0.2\textwidth]{Figures/Vol1_1_000108.eps} & \includegraphics[width=0.2\textwidth]{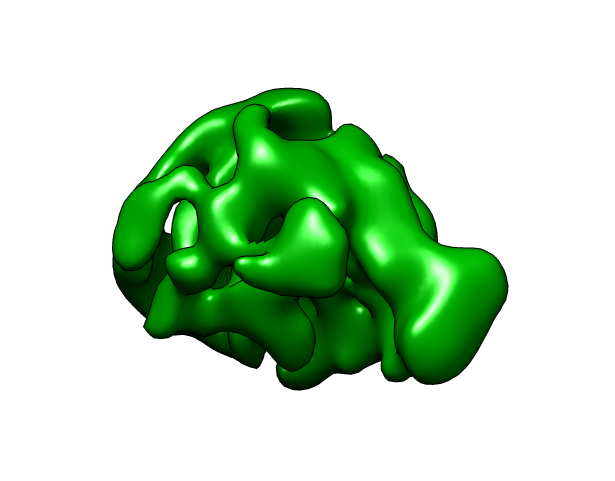} &
			\includegraphics[width=0.2\textwidth]{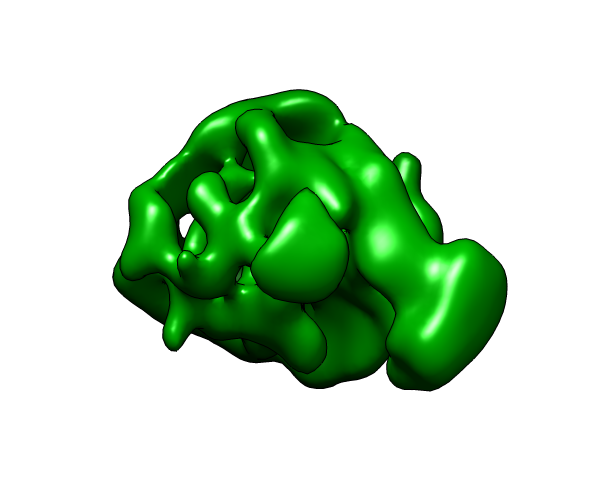} &
			\includegraphics[width=0.2\textwidth]{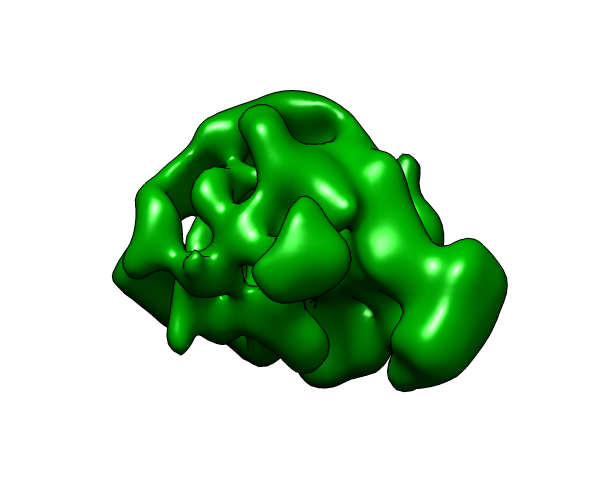} &
			\includegraphics[width=0.2\textwidth]{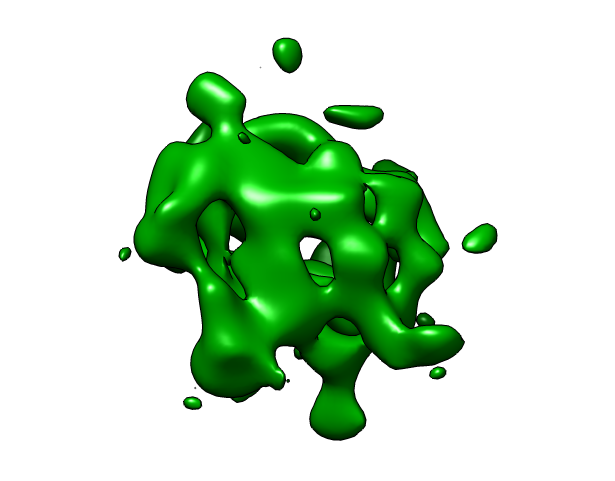}\\
			Class 2 & \includegraphics[width=0.2\textwidth]{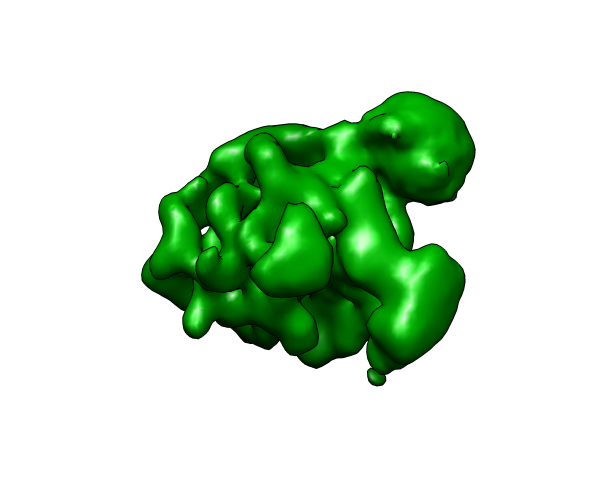} & \includegraphics[width=0.2\textwidth]{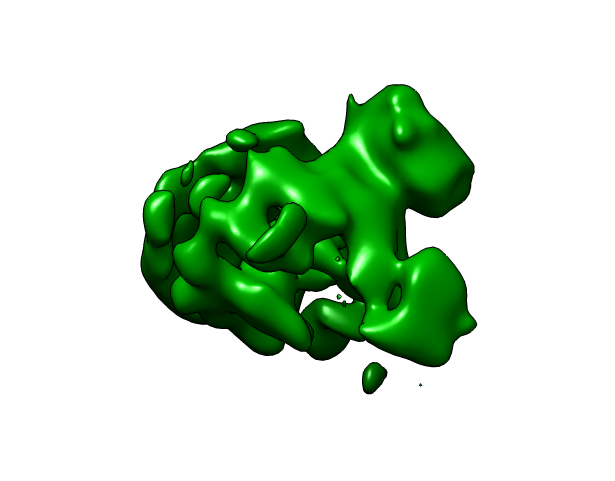} & \includegraphics[width=0.2\textwidth]{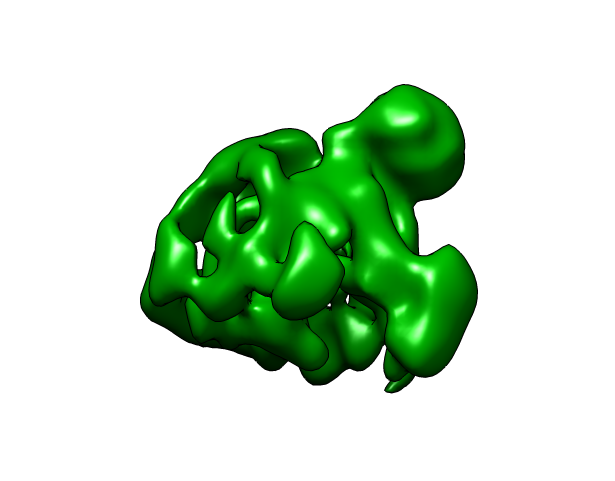} &
			\includegraphics[width=0.2\textwidth]{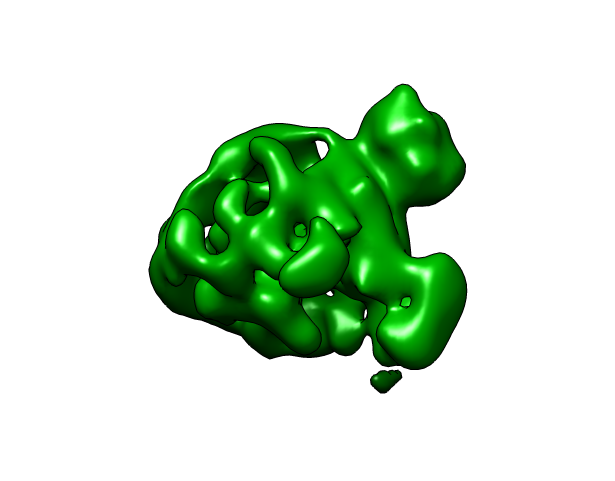} &
			\includegraphics[width=0.2\textwidth]{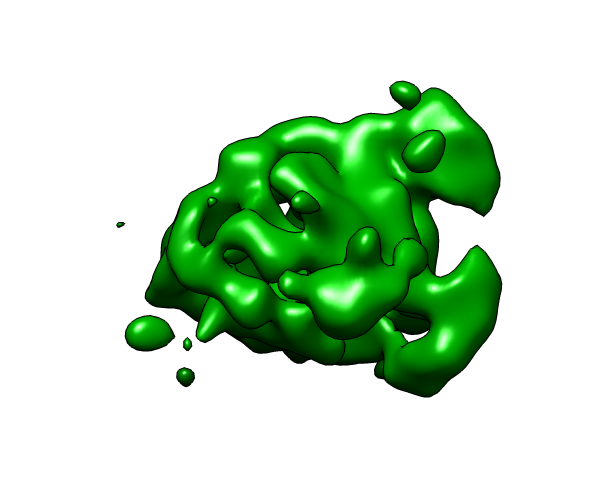}
		\end{tabular}
		\caption{The two volumes reconstructed from the heterogeneous data set at different levels of noise. Note that the volumes were reconstructed directly from the noisy images. Even at SNR = 0.05 it is easy to identify the additional ``ball" of Class 2 that does not appear in Class 1.}
		\label{fig:sym_2classes_balanced_results}
	\end{center}
\end{figure}

In the next experiment, we used the same setup as above, except that this time the classes were unbalanced, with $N_{1} = 4000$ and $N_{2} = 1000$. The results of this experiment are summarized in Table~\ref{tab:results_2classes_unbalanced}.
One can see from Tables~\ref{tab:results_2classes_balanced} and~\ref{tab:results_2classes_unbalanced} that the two classes returned by Algorithm~\ref{alg:rot_maxcut} tend to be of similar size. We see in Table~\ref{tab:results_2classes_unbalanced} that the estimated class 2 is always larger than its true size, and for higher noise levels it is very clear that the estimated classes tend to be of similar sizes. The intuition for this behavior is that for classes of similar size, the number of edges in the cut is maximal.
This behavior can be alleviated by reconstructing a volume from the class to which more images were assigned, which results in a more accurate model for one of the volumes. Using this volume, we can get more accurate estimates for the rotations and common lines, which in turn will result in a better estimate of the partition.

The running time of Algorithm~\ref{alg:rot_maxcut} depends on its initialization and on the running times of the LUD algorithm, the max-cut algorithm, and the number of iterations used.
In the experiments presented in this subsection, 
the total running time for the whole algorithm (including common lines search) for 5000 images was less then 12 hours allocated as follows: about 2 hours for the common lines search, 4 hours for the initial rotations assignment (this initial assignment is much slower then subsequent assignments, since all the images are in a single class), and about 2 more hours for each max-cut and rotations assignment steps.

\begin{table}
\begin{center}
		\begin{tabular}{|c|c|cc|c|c|c|c|}
			\hline
			\multirow{2}{*}{SNR} & \multirow{2}{55pt}{\centering{Estimated class}} & \multicolumn{2}{c|}{Correct class} & \multirow{2}{*}{Class size} & \multirow{2}{70pt}{\centering Precision of class 1}& \multirow{2}{45pt}{\centering Precision}& \multirow{2}{80pt}{\centering{\% correct common lines}} \\ \cline{3-4}
			& & class 1 & class 2 & & &&\\ \hline
			
			\multirow{2}{*}{1} & class 1 & 3974 & 0 & 3974 & \multirow{2}{*}{1} & \multirow{2}{*}{0.98} & \multirow{2}{*}{90.48\%}\\
							   & class 2 & 26 & 1000 & 1026 & & &\\
			\hline
			\multirow{2}{*}{0.5} & class 1 & 3479 & 0 & 3479 & \multirow{2}{*}{1}& \multirow{2}{*}{0.66}& \multirow{2}{*}{73.71\%}\\
			& class 2 & 521 & 1000 & 1521 & & &\\
			\hline

			\multirow{2}{*}{0.15} & class 1 & 2699  & 9   & 2708  &  \multirow{2}{*}{0.99}& \multirow{2}{*}{0.43}&  \multirow{2}{*}{39.15\%}\\
			& class 2 &  1301   & 991 & 2292 & && \\
			\hline
			\multirow{2}{*}{0.1} & class 1 & 2598 & 34 & 2632 & \multirow{2}{*}{0.99}& \multirow{2}{*}{0.41}&  \multirow{2}{*}{29.40\%}\\
			& class 2 &  1402 & 966 & 2368 & & & \\
			\hline
			\multirow{2}{*}{0.05} & class 1 &  $2505$ & $61$ & $2566$  & \multirow{2}{*}{0.98}& \multirow{2}{*}{0.39} &  \multirow{2}{*}{ 16.65\%}\\
			& class 2 &   $1495$ & $939$ & $2434$ & &&\\
			\hline
			\multirow{2}{*}{0.02} & class 1 & 2305 & 152  & 2457 & \multirow{2}{*}{0.94}& \multirow{2}{*}{0.33} &  \multirow{2}{*}{4.79\%}\\
			& class 2 & 1695  & 848 & 2543 & &&\\
			\hline
			& & 4000 & 1000 & & &&\\
			\hline
		\end{tabular}
		\caption{Results of Algorithm~\ref{alg:rot_maxcut} for unbalanced classes. For a detailed description of the different columns see Table~\ref{tab:results_2classes_balanced}.}
		\label{tab:results_2classes_unbalanced}
\end{center}
\end{table}

Finally, we tested the algorithm using simulated data with $K=3$ classes. In this case, the three volumes were a density map of the 50S subunit of the E. coli ribosome (the same volume used above for the experiment with $K=2$) and its two modified versions. Visualizations of the three volumes are shown in Figure~\ref{fig:vol1vol2vol3}.
In this experiment we used  $N_{1} = N_{2} = N_{3}= 1500$ and $ \mbox{SNR} = 0.15 $. The results are summarized in Table~\ref{tab:results_3classes}. It is noticeable that for the same noise levels, the precision for $ K=3 $ is lower than for $ K=2 $. One possible reason for this is that the bound provided by the Frieze-Jerrum algorithm for $ K=3 $ is worse than the bound for $ K=2 $.

\begin{figure}
	\centering
	\subfloat[Volume 1]{
        \includegraphics[width=0.25\textwidth]{Figures/Vol1_1_000108.eps}
        \label{fig:vol_3_1}
    }
    \subfloat[Volume 2]{
        \includegraphics[width=0.25\textwidth]{Figures/Vol1_m_1_000108_circle.eps}
        \label{fig:vol_3_2}
    }
	\subfloat[Volume 3]{
        \includegraphics[width=0.25\textwidth]{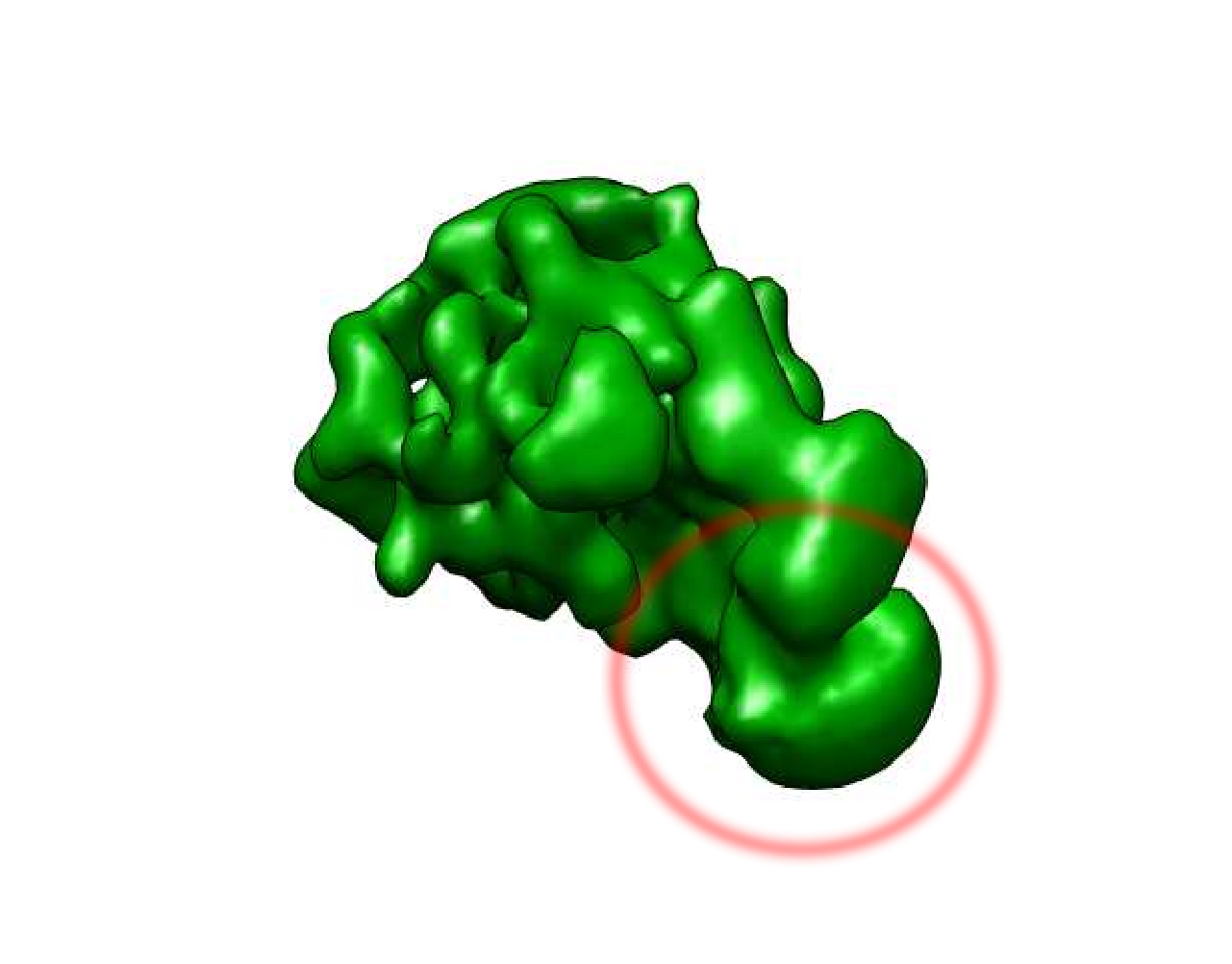}
        \label{fig:vol_3_3}
    }
	\caption{Volumes used in the $K=3$ simulated data experiment. \protect\subref{fig:vol_3_1} is the original volume; \protect\subref{fig:vol_3_2}~and~\protect\subref{fig:vol_3_3} are the modified versions of~\protect\subref{fig:vol_3_1}.}
	\label{fig:vol1vol2vol3}
\end{figure}

\begin{table}
\begin{center}
		\begin{tabular}{|c|c|ccc|c|c|c|}
			\hline
			\multirow{2}{*}{SNR} & & \multicolumn{3}{c|}{Correct class} & \multirow{2}{*}{Class size} & \multirow{2}{*}{Precision} & \multirow{2}{80pt}{\centering{\% correct common lines}} \\ \cline{3-5}
			& & class 1 & class 2 & class 3& & &\\ \hline
			
			\multirow{3}{*}{1.00} &
			  class 1 & 1452 &   0   &  0    & 1452 &\multirow{3}{*}{0.963}& \multirow{3}{*}{72.11\%}\\
			& class 2 &  35  &  1497 & 22    & 1554 &  &\\
			& class 3 & 13   &  3    &  1478 & 1494 &  &\\
					
			\hline
			\multirow{3}{*}{0.50} &
			class 1 & 1418 & 21   & 14   & 1513 &\multirow{3}{*}{0.935}& \multirow{3}{*}{63.61\%}\\
			& class 2 & 70   & 1435 & 29   & 1534 &  &\\
			& class 3 & 12   & 44   & 1457 & 1513 &  &\\
			
			\hline
			
			\multirow{3}{*}{0.15} &
			  class 1 & 1407 & 59   & 59   & 1525 &\multirow{3}{*}{0.888}& \multirow{3}{*}{38.14\%}\\
			& class 2 & 18   & 1345 & 85   & 1448 &  &\\
			& class 3 & 75   & 96   & 1356 & 1527 &  &\\
			\hline
			
			\multirow{3}{*}{0.10} &
 			  class 1 & 644 & 484 & 391 & 1519 &\multirow{3}{*}{0.33}& \multirow{3}{*}{12.15\%}\\
			& class 2 & 301 & 566 & 612 & 1479 &  &\\
			& class 3 & 555 & 450 & 497 & 1502 &  &\\
			\hline
			& & 1500 & 1500 & 1500 & & &\\
			\hline
		\end{tabular}
		\caption{Results of Algorithm~\ref{alg:rot_maxcut} for $K=3$ classes.}
		\label{tab:results_3classes}
\end{center}
\end{table}

\subsection{Comparison with RELION} \label{subsec:compareRelion}
We next compare the performance of Algorithm~\ref{alg:rot_maxcut} to that of RELION~\cite{scheres2012relion,scheres2016processing}, which 
implements expectation-maximization algorithms for image and volume classification, and for three-dimensional reconstruction. In particular, we demonstrate the well-known weakness of the expectation-maximization approach of providing a solution which is optimal only locally, by showing that in some cases RELION returns very accurate results (outperforming Algorithm \ref{alg:rot_maxcut}), while in other cases it fails even in ``simple" settings.

The first experiment uses simulated projections of yeast 80S ribosome-tRNA complexes. The two volumes used in this experiment are EMD-5976 and EMD-5977 from the Electron Microscopy Data Bank EMDB~\cite{EMDB}, corresponding to rotated and non-rotated conformations of the yeast 80S ribosome-tRNA complexes. According to the 0.5 threshold of the Fourier Shell Correlation (FSC), the two volumes agree to a resolution of is $ 19.27$~\AA. The FSC curve between EMD-5976 and EMD-5977 is presented in Figure \ref{fig:FSC_cleanVols_EMD5976}. We generated 3000 projections from each of the volumes, each projection of size $100\times 100$ pixels, with pixel size of $3.78$~\AA. White Gaussian noise was added to the images so that the resulting images have SNR of $0.5$. The images, corresponding STAR file, and the clean volumes are available at~\url{http://www.math.tau.ac.il/~aizeny/projects_cryo_hetero.html}.

\begin{figure}
	\centering
	\subfloat[FSC of EMD-5976 and EMD-5977 ]{
		\includegraphics[width=0.45\textwidth]{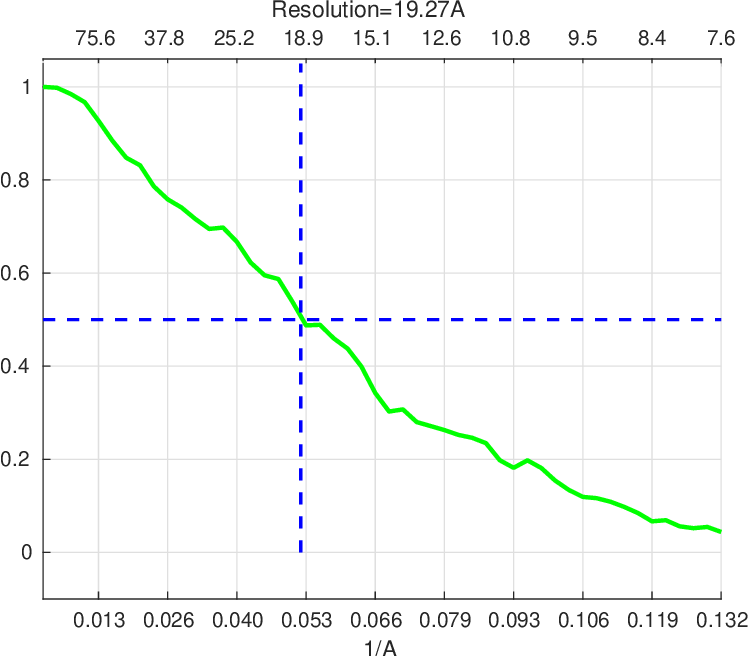}
		\label{fig:FSC_cleanVols_EMD5976}
	}~~
	\subfloat[FSC of EMD-0104 and EMD-0105]{
		\includegraphics[width=0.45\textwidth]{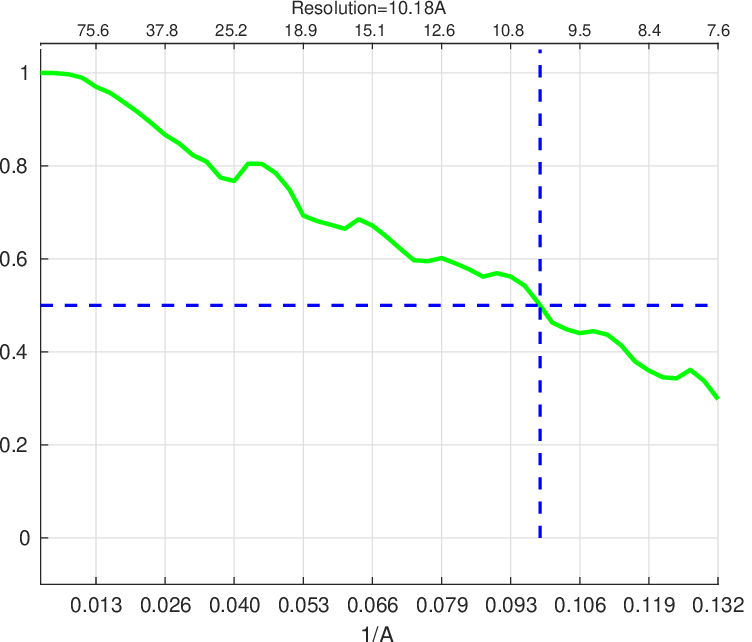}
		\label{fig:FSC_cleanVols_EMD0104}
	}
	\caption{FSC curves comparing the original volumes.}
	\label{fig:FSC_cleanVols_EMD5976_EMD0104}
\end{figure}

To apply Algorithm~\ref{alg:rot_maxcut} to this dataset, we initialized it with rotations generated using RELION's ``refine3D" procedure, where the ``Number of classes" parameter was set to~$1$. Given this initialization, Algorithm~\ref{alg:rot_maxcut} estimated a partition with precision $0.99$: in class~1 there were  $ 0 $ images of EMD-5976 and $ 2977$ images of EMD-5977, and in class~2 there were $ 3000 $ images of EMD-5976 and $ 23 $ images of EMD-5977. The weights matrix $W$ from Step~\ref{algstep:maxcut} of Algorithm~\ref{alg:theoretical} is shown in Figure~\ref{fig:W} (values shown in gray-scale). In Figure~\ref{fig:Wrp} the images are ordered in some random order (to emphasize the block structure of Figure~\ref{fig:Wa}), and in Figure~\ref{fig:Wa} the images are sorted such that the first 3000 images are of EMD-5976 and the next 3000 are of EMD-5977. Based on the classification of Algorithm \ref{alg:rot_maxcut}, we reconstructed two volumes using RELION. The two volumes were reconstructed with resolution of 8.5~\AA~and 8.6~\AA~compared to their corresponding volume from the EMDB, and with resolution of 19.8~\AA~and 20.0~\AA~compared to the other volume from the EMDB. The FSC curves are presented in Figure \ref{fig:FSC4_AlgRecon_EMD5976_vol12}. The total running time of Algorithm~\ref{alg:rot_maxcut} for this dataset on the platform described above is 6 hours and 40 minutes, out of which 6 hours and 20 minutes were required for finding the common lines between the images.

\begin{figure}
	\centering
	\subfloat[]{
		\includegraphics[width=0.45\textwidth]{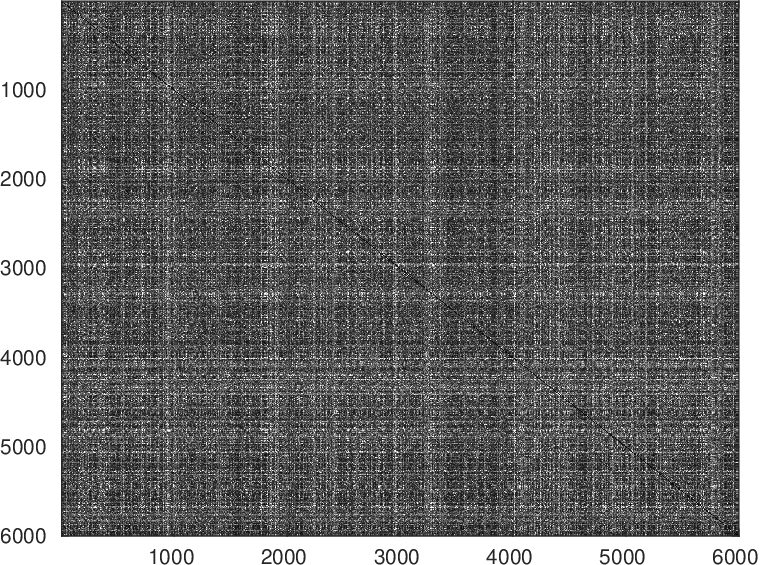}
		\label{fig:Wrp}
	}
	\subfloat[]{
		\includegraphics[width=0.45\textwidth]{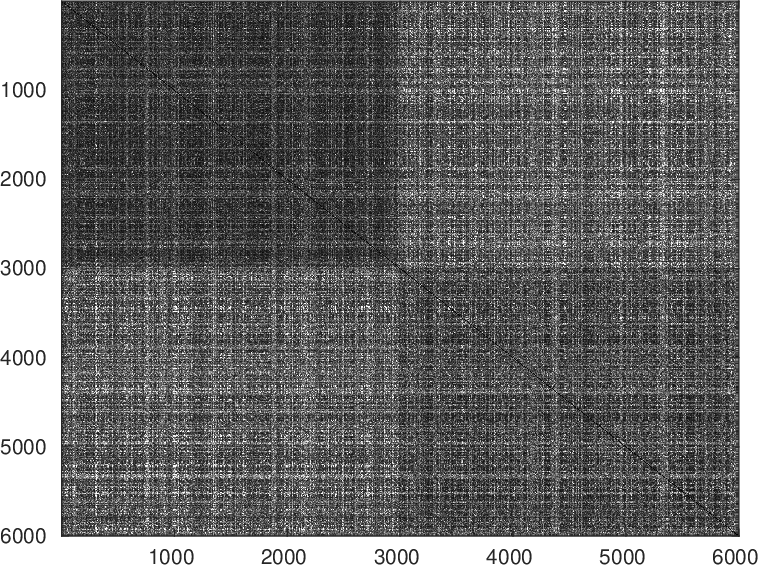}
		\label{fig:Wa}
	}	
	\caption{The weights matrix $W$ of Algorithm~\ref{alg:rot_maxcut}. In (a) The images are ordered randomly. In (b) images 1 - 3000 are from EMD-5977 and images 3001 - 6000 are from EMD-5976. Lighter shades correspond to larger weights.}
	\label{fig:W}
\end{figure}

\begin{figure}
	\centering
	\subfloat[FSC curves of volume 1 compared to EMD-5976 and EMD-5977. ]{
		\includegraphics[width=0.45\textwidth]{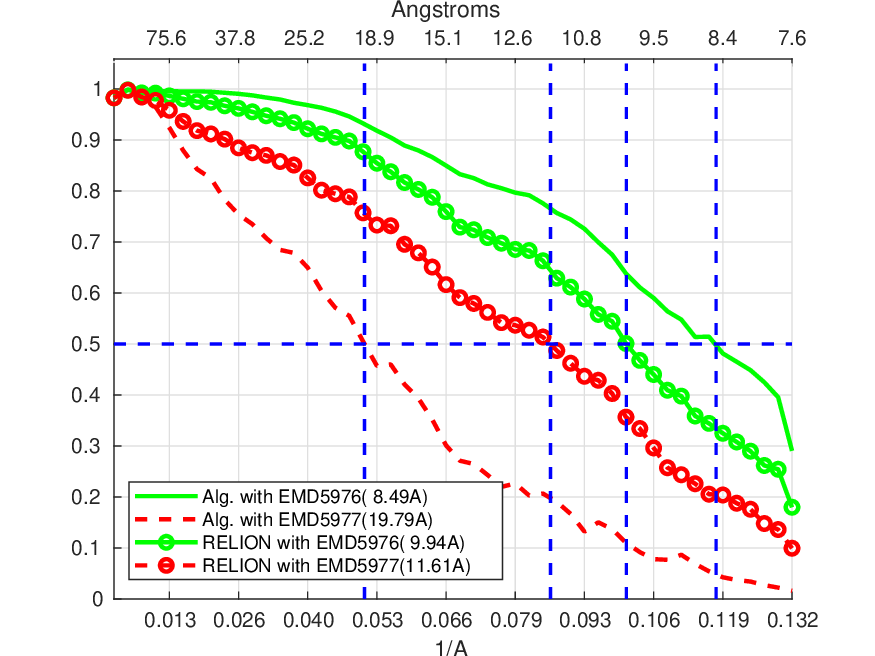}
		\label{fig:FSC4_AlgRecon_EMD5976_vol1}
	}~~
	\subfloat[FSC curves of volume 2 compared to EMD-5976 and EMD-5977.]{
		\includegraphics[width=0.45\textwidth]{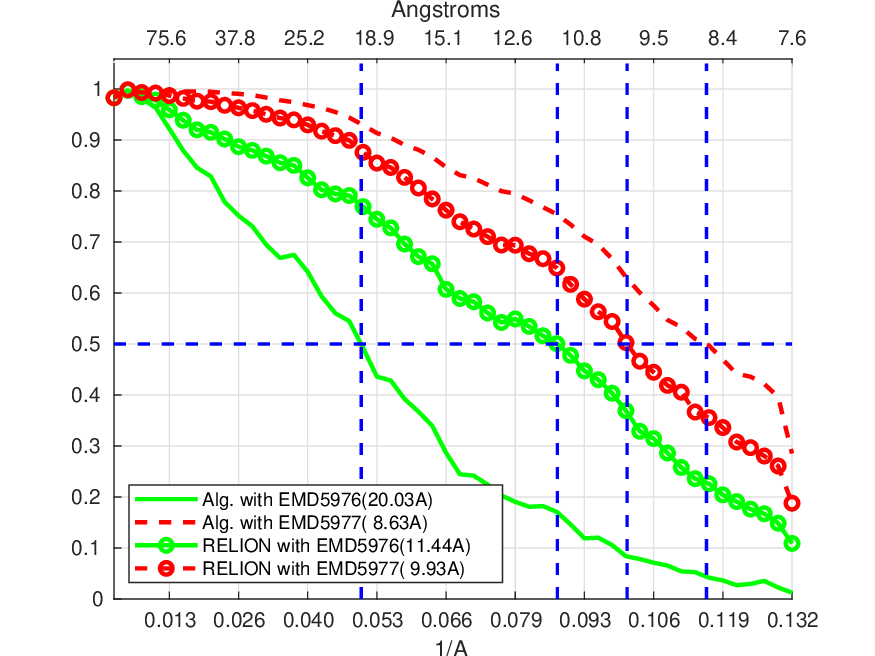}
		\label{fig:FSC4_AlgRecon_EMD5976_vol2}
	}
	\caption{FSC curves comparing the reconstructions made based on the partitions of Algorithm~\ref{alg:rot_maxcut} and RELION. The green (solid) lines are the FSC curves with EMD-5976, and the red (dashed) lines are the FSC curves with EMD-5977. Lines with markers are FSC curves of RELION based volumes, and lines without markers FSC curves of Algorithm \ref{alg:rot_maxcut} based volumes. }
	\label{fig:FSC4_AlgRecon_EMD5976_vol12}
\end{figure}

To initialize RELION's 3D classification algorithm for this dataset, we used the two ground truth volumes (used to simulate the projections). The classification estimated by RELION has precision of only $0.6$: in class~1 there were  1781 images of EMD-5976 and 1175 images of EMD-5977, and in class~2 there were 1219 images of EMD-5976 and 1825 images of EMD-5977.  During the tests we followed the common practice of running RELION with 25 classification iterations. 
Next we reconstructed the volumes corresponding to the two classes using RELION (based on the partition made by RELION). The two volumes were reconstructed with resolution of 9.9~\AA~and 9.9~\AA ~compared with their ground truth, and with resolutions of 11.6~\AA~and 11.4~\AA~compared with the ground truth of the other class.

In a second experiment, we used the same setting as above, but with EMD-0104 and EMD-0105. The resolution between the two volumes is $ 10.18 $~\AA~(so the volumes are more similar than the volumes in the previous experiment). The FSC curve between EMD-0104 and EMD-0105 is presented in Figure \ref{fig:FSC_cleanVols_EMD0104}. Algorithm~\ref{alg:rot_maxcut} estimated a partition whose precision is $0.7$. Note that the precision in this experiment is lower than the precision in the previous one (comparing EMD-5976 and EMD-5977), although we used the same number of images. Again, using the classification of Algorithm \ref{alg:rot_maxcut}, we reconstructed two volumes using RELION. The two volumes were reconstructed with resolution of 7.9~\AA~and 8.3~\AA~when compared with their ground truth, and with resolution of  9.5~\AA~and 9.6~\AA~when compared with the ground truth of the other class. The FSC curves are presented in Figure \ref{fig:FSC4_AlgRecon_EMD0104_vol12}. 
In this case, RELION returned a partition whose precision is $ 0.93 $. We do not have an explanation for the improvement in the performance of RELION between the two experiments. Next, we reconstructed the two volumes using RELION (based on the partition made by RELION). Each volume was reconstructed with resolution of 7.6~\AA~when compared with its ground truth, and with resolution of 10.6~\AA~and 11.0~\AA~when compared with the ground truth of the other class.

\begin{figure}
	\centering
	\subfloat[FSC curves of volume 1 compared to EMD-0104 and EMD-0105.]{
		\includegraphics[width=0.45\textwidth]{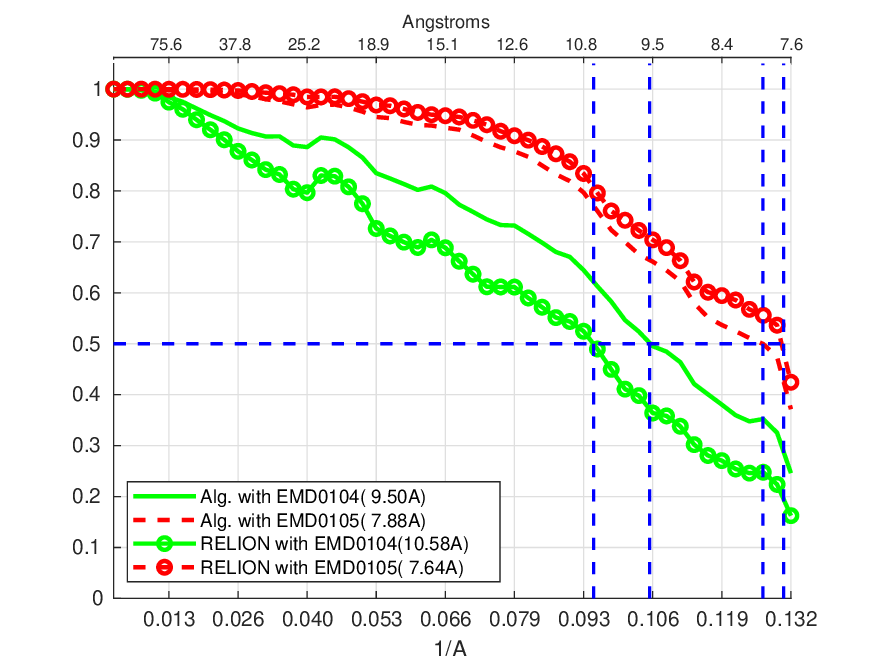}
		\label{fig:FSC4_AlgRecon_EMD0104_vol1}
	}~~
	\subfloat[FSC curves of volume 1 compared to EMD-0104 and EMD-0105.]{
		\includegraphics[width=0.45\textwidth]{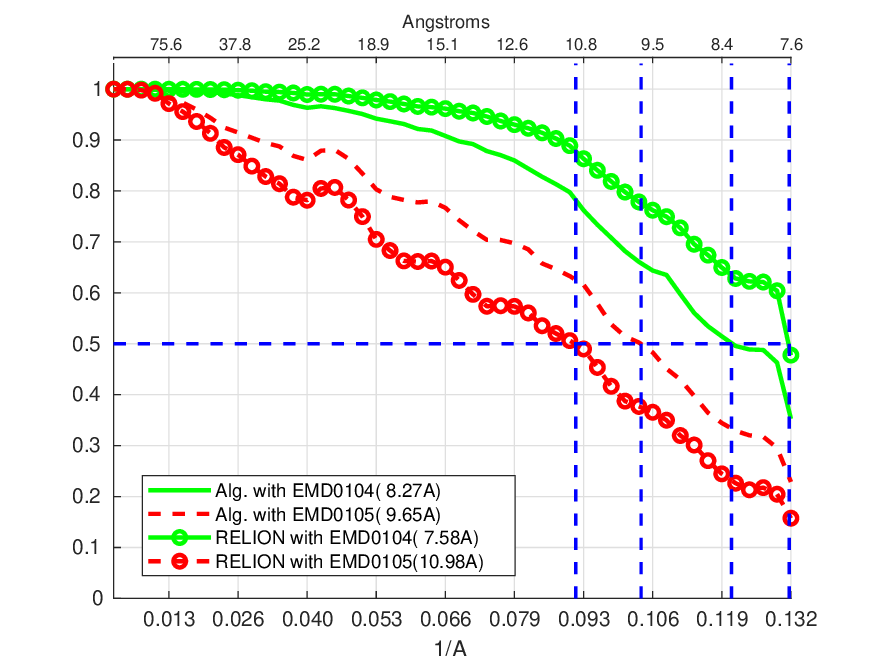}
		\label{fig:FSC4_AlgRecon_EMD0104_vol2}
	}
	\caption{FSC curves comparing the reconstructions made based on the partitions of Algorithm~\ref{alg:rot_maxcut} and RELION. The green (solid) lines are the FSC curves with EMD-0104, and the red (dashed) lines are the FSC curves with EMD-0105. Lines with markers are FSC curves of RELION based volumes, and lines without markers FSC curves Algorithm \ref{alg:rot_maxcut} based volumes.}
	\label{fig:FSC4_AlgRecon_EMD0104_vol12}
\end{figure}

Next, we show the dependency of the precision on the number of images in the two aforementioned experiments. We show in Tables \ref{tab:precision_diffN} and \ref{tab:precision_diffN_relion} a summary of the results of applying Algorithm \ref{alg:rot_maxcut} and RELION's classification in the same setting as above but with different numbers of projection images (the precision is shown for 4000, 6000, and 10000 projection images). It is noticeable that when the two structures in the heterogeneous dataset are ``more different" (like in the experiment of EMD-5976 and EMD-5977) then a relatively small number of projection images suffices for a ``good" partitioning by Algorithm \ref{alg:rot_maxcut}, and adding more projection images increases the precision. When the two structures in the heterogeneous dataset are similar (like in the experiment of EMD-0104 and EMD-0105), more projection images are necessary to get a high precision, and increasing the number of projection images again improves the precision.

In Table \ref{tab:precision_diffN_relion} we see inconsistency in the precision of the partition of RELION. For 6000 images the precision is worse than for 4000 images. It is also evident from Tables~\ref{tab:precision_diffN} and~\ref{tab:precision_diffN_relion} that for the dataset of	EMD-5976 and EMD-5977, Algorithm~\ref{alg:rot_maxcut} preforms better than RELION for the tested sizes. On the other hand, in the experiment with EMD-0104 and EMD-0105, although Algorithm~\ref{alg:rot_maxcut} improve significantly with the sample size, RELION outperforms Algorithm~\ref{alg:rot_maxcut}  for all sample sizes. 
\begin{table}
	\begin{center}
		\begin{tabular}{|c|c|c|c|}
			\hline
				\multirow{2}{*}{Volumes} & \multicolumn{3}{c|}{Precision}  \\ \cline{2-4}
			&4000 images & 6000 images & 10000 images\\
			\hline		
			EMD-5976 and EMD-5977 & 0.990 & 0.991 & 0.996 \\
			\hline
			EMD-0104 and EMD-0105  & 0.69 & 0.70 & 0.87 \\

			\hline
		\end{tabular}
		\caption{Results of Algorithm~\ref{alg:rot_maxcut} for a different numbers of projections.}
		\label{tab:precision_diffN}
	\end{center}	
\end{table}

\begin{table}
	\begin{center}
		\begin{tabular}{|c|c|c|c|}
			\hline
			\multirow{2}{*}{Volumes} & \multicolumn{3}{c|}{Precision}  \\ \cline{2-4}
			&4000 images & 6000 images & 10000 images\\
			\hline		
			EMD-5976 and EMD-5977 & 0.95 & 0.6 & 0.96 \\
			\hline
			EMD-0104 and EMD-0105  & 0.63 & 0.93 & 0.99 \\
			
			\hline
		\end{tabular}
		\caption{Results of RELION's Classification for a different numbers of projections.}
		\label{tab:precision_diffN_relion}
	\end{center}	
\end{table}

\section{Conclusion}
\label{sec:conclusion}
We presented an algorithm for approximating the class partition of a heterogeneous cryo-EM image data set. We derived theoretical bounds for the algorithm, applied it on simulated data and compared its performance with RELION. As the proposed algorithm is based on the LUD algorithm~\cite{wang2013orientation}, it is applicable only to molecules without symmetry. However, it can be easily combined with abinitio reconstruction algorithms for molecules with symmetry. Moreover, the LUD algorithm can be completely replaced with other orientation assignment algorithms such as~\cite{Shkolnisky2012}, or RELION's algorithm for orientation assignment. The proposed algorithm can also be supplied with confidence information regarding the score between each pair of images. Such confidence information is available as a byproduct of the algorithms~\cite{Shkolnisky2012,pragier2016graph}, and may further improve the robustness of our algorithm to noise.

We noted above that the algorithm ``favors" balanced partitions, and thus the case of unbalanced classes may result in less accurate output. This behavior is an inherent drawback of the max-cut problem. A possible future research direction to alleviate this problem is replacing the max-cut formulation with a different optimization problem.

\section{Acknowledgments}
We would like to thank Dr. Joakim And\'en for his remarks and corrections on a previous version of the paper, Prof. Niv Buchbinder and Prof. Amir Back for their advice regarding optimizations, and the reviewers for there comments.
This research was supported by the European Research Council
(ERC) under the European Union’s Horizon 2020 research and innovation programme (grant agreement 723991 - CRYOMATH), by Award Number R01GM090200 from the NIGMS, by a Fellowship from Jyv\"{a}skyl\"{a} University and the Clore Foundation.
\begin{appendices}
\section{Proof of Lemma~\ref{lem:maxCut_twoSide}}\label{sec:proof_lem_twosided}
As shown in~\cite{goemans1995improved}, the solution to the max-cut problem is given by the rank-1 matrix $\Sigma$ that minimizes $\operatorname{trace}(W\Sigma)$ such that $\Sigma$ is positive semidefinite, $\Sigma = \sigma\sigma^T$ where $\sigma = (\sigma_1, \ldots, \sigma_N)^{T}$, and $\sigma_i \in \{ +1,-1\} $. The sign of $\sigma_i$ encodes the subset of the cut to which vertex $i$ belongs. Due to the constraint $\Sigma = \sigma\sigma^T$, we have that $\Sigma_{ii} = 1$ for all $i$ (where $\Sigma_{ij}$ are the entries of~$\Sigma$). The Goemans-Williamson algorithm discards the rank-1 constraint as well as the constraint $ \sigma_i \in \{-1,1\}$, while keeping the constraints that~$\Sigma$ is positive semi-definite and that $\Sigma_{ii} = 1$. Thus, the Goemans-Williamson algorithm  minimizes $\operatorname{trace}(W\Sigma)$ for $\Sigma$ positive semidefinite with $\Sigma_{ii} = 1$.

We now show that the solution obtained by the Goemans-Williamson algorithm satisfies $\Sigma_{ij} \in [-1,1]$. Since the Goemans-Williamson optimization problem optimizes over positive semidefinite matrices $\Sigma$, all the $2\times 2$ principal minors of $\Sigma$ are non-negative, that is, $\operatorname{det} \Sigma^{ij} \ge 0$, where
\begin{equation*}
\Sigma^{ij} = \left(
\begin{array}{cc}
\Sigma_{ii} & \Sigma_{ij} \\
\Sigma_{ji} & \Sigma_{jj} \\
\end{array}
\right).
\end{equation*}
In other words, $\Sigma_{ii} \Sigma_{jj} - \Sigma_{ij} \Sigma_{ji} \ge 0$. Since $\Sigma_{ii} = 1$, we get that $1 - \Sigma_{ij}\Sigma_{ji} \geq 0$, or, using the symmetry of~$\Sigma$, $1 - \Sigma_{ij}^2 \geq 0$. Thus, $|\Sigma_{ij}| \leq 1$.

Without loss of generality, assume that the adjacency matrix of the bipartite graph $(V,E)$, denoted by $W$, is a block matrix given by
\begin{equation}\label{eq:block W}
W  = \left(\begin{array}{cc}
0_{N_1 \times N_1} & A_{N_1 \times N_2} \\
A^{T}_{N_2 \times N_1} & 0_{N_2 \times N_2}
\end{array}\right),
\end{equation}
where $A$ is some $N_1 \times N_2$ matrix with non-negative entries.
Denote the elements of $A$ in~\eqref{eq:block W} by $a_{ij}$. Then, 
\begin{equation}\label{eq:traceWSIGMA}
\operatorname{trace}(W\Sigma) = \sum_{i=1}^{N_1+N_2} \sum_{j=1}^{N_1+N_2} w_{ij}\Sigma_{ji} = \sum_{i=1}^{N_1}\sum_{j=1}^{N_2} a_{ij}\Sigma_{N_1+j,i} + \sum_{i=1}^{N_1} \sum_{j=1}^{N_{2}} a_{ji} \Sigma_{j,N_1+i}.
\end{equation}
Since $\Sigma_{ij} \in [-1,+1]$, we have that
\begin{equation}\label{eq:a}
a= -2\sum_{i=1}^{N_{1}}\sum_{j=1}^{N_{2}} a_{ij} \leq \operatorname{trace}(W\Sigma).
\end{equation}
Clearly, any cut $G_{1}$, $G_{2}$ of a graph $(V,E)$ can be encoded as a vector consisting of $+1$ and $-1$, whose $i$'th coordinate equals $1$ if node $i$ is in $ G_{1}$, and equals $-1$ if node $i$ is in $G_{2}$. In the case of the matrix $W$ in~\eqref{eq:block W}, we define $\sigma_{\text{opt}}$ to consist of a block of 1's of length $N_{1}$ followed by a block of $-1$'s of length $N_{2}$. It can be easily verified that the cut encoded by $\sigma_\text{opt}$ corresponds to the matrix $ \Sigma_\text{opt} = \sigma_\text{opt}^{} \sigma_\text{opt}^T $, which achieves the bound $a$ in~\eqref{eq:a}. Thus, $ \Sigma_\text{opt} $ is optimal out of all the positive semidefinite matrices. Since the Goemans-Williamson optimization problem is convex, there are no other minima points, and thus Goemans-Williamson algorithm will return $ \sigma_\text{opt} $, which concludes the proof.

\section{Distributions}\label{sec:distributions}

\begin{lemma}\label{lem:max2n_bounded}
	Let $ X $ be a random variable bounded in $ [0,a] $.
	Let $X^j_i$ be i.i.d. random variables with the same distribution as $ X $.
	Let $ Y_j = \sum_{i=1}^{N_j} X_i^j ,~ 1\leq j \leq N$.
	Then, $\max_{1\leq j\leq N} Y_{i}$ is bounded by $ \max_{1\leq j\leq N}N_j\E(X)+a\sqrt{\frac{\log_2{N}}{2}} \sqrt{N_j}$ with probability that converges to $1$ as $N \to \infty$.
\end{lemma}
\begin{proof}
	Applying Hoeffding's inequality to $ Y_j $ gives
	\begin{equation*}
	P\left(Y_j - \E(Y_j) \geq \sigma(Y_j)t\right) \leq e^{\frac{-2Var(Y_j) t^2}{a^2 N_j}}.
	\end{equation*}
	Since $ Y_j $ is a sum of $ N_j $ i.i.d. random variables, we get
	\begin{equation*}
	P\left(Y_j \geq  \E(Y_j) +  \sigma(Y_j)t\right) \leq e^{\frac{-2Var(X) t^2}{a^2}} = e^{-c t^2},
	\end{equation*}
	were $ c = \frac{2Var(X)}{a^2} $ (note that $ c \leq 1 $ because $ X $ is bounded in $ [0,a] $).
Thus
\begin{align}
P \left (Y_1<\E(Y_1)+t\sigma(Y_1), \ldots, Y_{N}<\E(Y_N)+t\sigma(Y_N) \right ) & = \prod\limits_{j=1}^{N} P(Y_j<\E(Y_j)+t\sigma(Y_j)) = \left(1- e^{-ct^2}\right)^{N}  \nonumber \\
& > 1-N e^{-ct^2} = 1 - e^{\log_2 N\ln 2 -ct^2},\label{eq:berineq}
\end{align}
where the inequality in~\eqref{eq:berineq} follows from Bernoulli's inequality. In particular, for $ t = \sqrt{\log_2{N}/c}$ we have
\begin{equation*}
P(\max_{1\leq j\leq N} Y_j <\max_{1\leq j\leq N}\E(Y_j)+\sqrt{\log_2{N}/c}~ \sigma(Y_j)) \geq 1-e^{(\log_2{N})(\ln2 -1)}.
\end{equation*}
Thus,
\begin{equation*}
\lim_{N\to\infty} P(\max_{1\leq j\leq N} Y_{j} >\max_{1\leq j\leq N}\E(Y_j)+\sqrt{\log_2{N}/c}~\sigma(Y_j)) \to 0.
\end{equation*}
Since $  Y_j = \sum_{i=1}^{N_j} X_i^j $, we have
\begin{equation*}
\E(Y_j) = N_j\E(X), ~~~ \sigma(Y_j) = \sqrt{N_j}\sigma(X),
\end{equation*}
and thus,
\begin{equation*}
\lim_{N\to\infty} P(\max_{1\leq j\leq N} Y_{j} >\max_{1\leq j\leq N}N_j\E(X)+\sqrt{\log_2{N}/c}~ \sqrt{N_j} \sigma(X)) \to 0,
\end{equation*}
or, substituting $ c $,
\begin{equation*}
\lim_{N\to\infty} P(\max_{1\leq j\leq N} Y_{j} >\max_{1\leq j\leq N}N_j\E(X)+a\sqrt{\frac{\log_2{N}}{2}}~ \sqrt{N_j}) \to 0.
\end{equation*}
	
\end{proof}

\end{appendices}
\bibliographystyle{plain}
\bibliography{CryoEMUsingMaxCut.bbl}
\end{document}